\newcommand\cA{{\cal A}}
\newcommand\cC{{\cal C}}
\newcommand\cF{{\cal F}}
\newcommand\cI{{\cal I}}
\newcommand\cK{{\cal K}}
\newcommand\cL{{\cal L}}
\newcommand\cR{{\cal R}}
\newcommand\cT{{\cal T}}
\newcommand{\shi}{$\mathcal{SHI}$\xspace}
\newcommand\fluffy{{\mathit{fluffy}}}
\newcommand\trillp{{TRILL$^P$}\xspace}
\newcommand\trillpbdd{{TORNADO}\xspace}
\newtheorem{theorem}{Theorem}
\newtheorem{definition}{Definition} 
\newtheorem{example}{Example} 
\title{Probabilistic DL Reasoning with Pinpointing Formulas: A Prolog-based Approach}
\author[R. Zese, G. Cota, E. Lamma, E. Bellodi,  and F. Riguzzi] 
{RICCARDO ZESE$^1$,  GIUSEPPE COTA$^1$, EVELINA LAMMA$^1$ \and ELENA BELLODI$^2$, FABRIZIO RIGUZZI$^2$\\
$^1$ Dipartimento di Ingegneria -- Universit\`a di Ferrara\\
Via Saragat 1, 44122, Ferrara, Italy  \\
$^2$ Dipartimento di Matematica e Informatica -- Universit\`a di Ferrara\\
Via Saragat 1, 44122, Ferrara, Italy\\
\email{name.surname@unife.it}}
\begin{document}
\maketitle

\begin{abstract}
  
  When modeling real world domains we have to deal with information that is incomplete or that comes from  sources with different trust levels. 
  This motivates the need for managing uncertainty in the Semantic Web. 
  To this purpose, we introduced a probabilistic semantics, named DISPONTE, in order to combine description logics with probability theory.
 The probability of a query can be then computed from the set of its explanations by  building a Binary Decision Diagram (BDD). 
  The set of explanations can be found using the \emph{tableau algorithm}, which has to handle non-determinism.
  Prolog, with its efficient handling of non-determinism, is  suitable  for implementing the tableau algorithm. TRILL and \trillp  are systems offering a Prolog implementation of the tableau algorithm.
  \trillp builds a \emph{pinpointing formula}, that compactly represents the set of explanations and can be directly translated into a BDD. Both reasoners were shown  to  outperform state-of-the-art DL reasoners.
  In this paper, we present an improvement of \trillp, named  \trillpbdd, in which the BDD is directly built during the construction of the tableau, further speeding up the overall inference process.
  An experimental comparison shows the effectiveness of \trillpbdd.
  All systems can be tried online in the TRILL on SWISH web application at \url{http://trill.ml.unife.it/}.
\end{abstract}

\section{Introduction}
\label{sec:intro}

The objective of the Semantic Web is to make information available in a form that is understandable and automatically manageable by machines.
In order to realize this vision, the W3C  supported the development of a family of 
knowledge representation formalisms of increasing complexity for defining ontologies, called Web Ontology Languages (OWL), based on Description Logics (DLs). 
In order to fully support the development of the Semantic Web, efficient DL reasoners are essential.
Usually, the most common approach adopted by reasoners is the \emph{tableau algorithm}~\cite{DBLP:journals/jar/HorrocksS07}, written in a procedural language. This algorithm applies some expansion rules on a tableau, a representation of the assertional part of the KB.
However, some of these rules are non-deterministic, requiring the implementation of a search strategy in an or-branching search space. Pellet~\cite{DBLP:journals/ws/SirinPGKK07}, for instance, is a reasoner written in Java.

Modeling real world domains requires dealing with information that is incomplete or that comes from 
sources with different trust levels. This motivated the need for managing uncertainty in the Semantic Web, and  led to many proposals for combining probability theory with OWL languages, or with the underlying DLs, such as P-$\mathcal{SHIQ}(\mathbf{D})$ \cite{DBLP:journals/ai/Lukasiewicz08}, $\mathcal{BEL}$~\cite{DBLP:conf/sum/CeylanP15}, Prob-$\mathcal{ALC}$~\cite{DBLP:conf/kr/LutzS10}, PR-OWL~\cite{DBLP:conf/semweb/CarvalhoLC13}, and those proposed in \citeN{DBLP:conf/semweb/JungL12}, \citeN{DBLP:conf/uai/Heinsohn94}, \citeN{DBLP:conf/kr/Jaeger94}, \citeN{DBLP:conf/aaai/KollerLP97}, \citeN{Ding04aprobabilistic}.

In \cite{bellodi2011distribution,RigBelLamZes15-SW-IJ,Zese17-SSW-BK}  we introduced DISPONTE, a probabilistic semantics for DLs. 
DISPONTE follows the distribution semantics \cite{DBLP:conf/iclp/Sato95}  derived from Probabilistic Logic Programming (PLP), that has emerged as one of the most effective approaches 
for representing probabilistic information in Logic Programming languages.
Many techniques have been proposed in PLP for combining Logic Programming with probability theory, for example \cite{DBLP:journals/tplp/LakshmananS01} and \cite{DBLP:journals/jlp/KiferS92} defined an extended immediate consequence operator that deals with probability intervals associated with atoms, effectively propagating the uncertainty among atoms using rules.


Despite the number of proposals for probabilistic semantics extending DLs, only few of them have been equipped with a reasoner to compute the probability of queries. 
Examples of probabilistic DL reasoners are PRON\-TO \cite{DBLP:conf/esws/Klinov08}, BORN~\cite{DBLP:conf/ore/CeylanMP15} and BUNDLE~\cite{RigBelLamZes15-SW-IJ,Zese17-SSW-BK}.
PRONTO, for instance, is a probabilistic reasoner that can be applied to P-$\mathcal{SHIQ}(\mathbf{D})$.
BORN answers probabilistic subsumption queries w.r.t. $\mathcal{BEL}$ KBs by using ProbLog for managing the probabilistic part of the KB.
Finally, BUNDLE performs probabilistic reasoning over DISPONTE KBs by exploiting Pellet to return explanations and Binary Decision Diagrams (BDDs) to compute the probability of queries. 



Usually DL reasoners adopt the \emph{tableau algorithm}~\cite{DBLP:journals/jar/HorrocksS07,horrocks2006even}. This algorithm applies some expansion rules on a tableau, a representation of the assertional part of the KB.
However, some of these rules are non-deterministic, requiring the implementation of a search strategy in an or-branching search space. 


Reasoners written in Prolog can exploit Prolog's backtracking facilities for performing the search, as has been observed in various works \cite{DBLP:journals/jar/BeckertP95,DBLP:dblp_journals/iandc/HustadtMS08,DBLP:dblp_journals/tplp/LukacsyS09,DBLP:journals/logcom/RiccaGSDGL09,DBLP:conf/iclp/GavanelliLRBZC15}.
For this reason, in \cite{ZesBelRig16-AMAI-IJ,Zese17-SSW-BK} we proposed  the system TRILL, 
a tableau reasoner implemented in Prolog. 
Prolog's search strategy is exploited for taking into account the non-determinism of the tableau rules.
TRILL can check the consistency of a concept and the entailment of an axiom from an ontology, and can also return the probability of a query.

Both BUNDLE and TRILL use Binary Decision Diagrams (BDDs) for computing the probability of queries from the set of all explanations.
They encode the results of the inference process in a BDD from which the probability can be computed in a time linear in the size of the diagram. 
We also developed \trillp~\cite{ZesBelRig16-AMAI-IJ,Zese17-SSW-BK}, which builds a \emph{pinpointing formula} able to compactly represent the set of explanations.
This formula is used to build the corresponding BDD and compute the query's probability.
In \cite{RigBelLamZes15-SW-IJ,ZesBelRig16-AMAI-IJ,Zese17-SSW-BK} we have extensively tested BUNDLE, TRILL and \trillp, showing that they can achieve significant results in terms of scalability and speed.

In this paper, we present \trillpbdd for ``Trill powered by pinpOinting foRmulas and biNAry DecisiOn diagrams'', in which the BDD representing the pinpointing formula is directly built during tableau expansion, speeding up the overall inference process.
TRILL, \trillp and \trillpbdd are all available in the TRILL on SWISH web application at \url{http://trill.ml.unife.it/}.

We also present an experimental evaluation of \trillpbdd by comparing it with several probabilistic and non-probabilistic reasoners. Results show that \trillpbdd is as fast as or faster than state-of-art reasoners also for non-probabilistic inference and can, in some cases, avoid an exponential blow-up.

The paper is organized as follows: Section~\ref{sec:description-logics} briefly introduces DLs and Section~\ref{sec:prob-descr-logics} presents DISPONTE. 
The tableau algorithm of \trillp and \trillpbdd is discussed in Section~\ref{sec:reasoning}, followed by the description of the two systems in Section~\ref{sec:systems}.
Finally, Section~\ref{sec:exp} shows the experimental evaluation and Section~\ref{sec:concl} concludes the paper.

\section{Description Logics}
\label{sec:description-logics}

DLs are fragments of FOL languages used for modeling knowledge bases (KBs) that exhibit nice computational properties such as decidability and/or low complexity \cite{Badeer:2008:DL:52211}. 
There are many  DL languages that differ by the constructs that are allowed for defining concepts (sets of individuals of the domain) and roles (sets of pairs of individuals). Here we illustrate the DL \shi which is the expressiveness level supported by \trillp and \trillpbdd.

Let us consider a set of \emph{atomic concepts} $\mathbf{C}$, a set of \emph{atomic roles} $\mathbf{R}$ and a set of individuals $\mathbf{I}$.
A \emph{role} could be an atomic role $R \in \mathbf{R}$ or the inverse $R^{-}$ of an atomic role $R \in \mathbf{R}$. We use $\mathbf{R^{-}}$ to denote the set of all inverses of roles in $\mathbf{R}$. 
Each $A \in \mathbf{A}$, $\bot$ and $\top$ are concepts. 
If $C$, $C_1$ and $C_2$ are concepts and $R \in \mathbf{R}\cup\mathbf{R^{-}}$, then $(C_1\sqcap C_2)$, $(C_1\sqcup C_2 )$ and $\neg C$ are concepts, as
well as $\exists R.C$ and $\forall R.C$.

A \emph{knowledge base} (KB) $\cK = (\cT, \cR, \cA)$ consists of a TBox $\cT$, an RBox $\cR$ and an ABox $\cA$. An RBox $\cR$ is a finite set of \emph{transitivity axioms} $Trans(R)$ and \emph{role inclusion axioms} $R \sqsubseteq S$, where $R, S \in \mathbf{R} \cup \mathbf{R^{-}}$.
A \emph{TBox} $ \cT $ is a finite set of \textit{concept inclusion axioms} $C\sqsubseteq D$, where $C$ and $D$ are concepts.
An \emph{ABox} $\cA$ is a finite set of \textit{concept membership axioms} $a : C$ and \textit{role membership
axioms} $(a, b) : R$, where $C$ is a concept, $R \in \mathbf{R}$ and $a,b \in \mathbf{I}$. 

A \shi KB is usually assigned a semantics in terms of interpretations $\cI = (\Delta^\cI , \cdot^\cI )$, where $\Delta^\cI$ is a non-empty \textit{domain} and $\cdot^\cI$ is the \textit{interpretation function}, which assigns  an element in $\Delta ^\cI$ to each $a \in \mathbf{I}$, a subset of $\Delta^\cI$ to each concept and a subset of $\Delta^\cI \times \Delta^\cI$ to each role.

A query $Q$ over a KB $\cK$ is usually an axiom for which we want to test the entailment from the KB, written as $\cK \models Q$. 

\begin{example}
\label{people+pets}
The following KB is inspired by the ontology  \texttt{people+pets} \cite{ISWC03-tut}:
$$\begin{array}{lcl}
\exists hasAnimal.Pet \sqsubseteq NatureLover&\ &Cat\sqsubseteq Pet\\
\fluffy: Cat &\ & (kevin,\fluffy):hasAnimal  \\
tom: Cat &\ & (kevin,tom):hasAnimal
\end{array}$$
It states that  individuals that own an animal which is a pet are nature lovers and that $kevin$ owns the animals $\fluffy$ and $tom$, which are cats. Moreover,  cats are pets.
The KB entails the query $Q=kevin:NatureLover$.
\end{example}

\section{Probabilistic Description Logics}
\label{sec:prob-descr-logics}
DISPONTE \cite{bellodi2011distribution,RigBelLamZes15-SW-IJ,Zese17-SSW-BK} applies the distribution semantics to probabilistic ontologies~\cite{DBLP:conf/iclp/Sato95}. 
In DISPONTE a \emph{probabilistic knowledge base} $\cK$ is a set of certain and probabilistic axioms.
\emph{Certain axioms} are regular DL axioms.  
 \emph{Probabilistic axioms} take the form $p:: E$, where $p$ is a real number in $[0,1]$ and $E$ is a DL axiom.
Probability $p$ can be interpreted as the degree of our belief in axiom $E$.
For example, a probabilistic concept membership axiom $p::a:C$ means that we have degree of belief $p$ in $a : C$. The statement that cats are pets with probability 0.6 can be expressed as
$0.6::Cat\sqsubseteq Pet$.
 
The idea of DISPONTE is to associate independent Boolean random variables with the probabilistic axioms. By assigning values to every random variable we obtain a \emph{world}, i.e. the set of probabilistic axioms whose random variable takes on value 1 together with the set of certain axioms.
Therefore, given a KB with $n$ probabilistic axioms, there are $2^n$ different worlds, one for each possible subset of the probabilistic axioms. Each world contains all the non-probabilistic axioms of the KB.
DISPONTE defines a probability distribution over worlds as in probabilistic logic programming.

The probability of a world $w$ is computed by multiplying the probability $p$ for each probabilistic axiom included in the world with the probability $1-p$ for each probabilistic axiom not included in the world.

Formally, an \emph{atomic choice} is a couple $(E_i,k)$ where $E_i$ is the $i$-th probabilistic axiom  and $k\in \{0,1\}$. 
$k$ indicates whether $E_i$ is chosen to be included in a world ($k$ = 1) or not ($k$ = 0). 
A \emph{composite choice} $\kappa$ is a consistent set of atomic choices, i.e.,  $(E_i,k)\in\kappa, (E_i,m)\in \kappa$ implies $k=m$ (only one decision is taken for each axiom). 
The probability of a composite choice $\kappa$  is 
$P(\kappa)=\prod_{(E_i,1)\in \kappa}p_i\prod_{(E_i, 0)\in \kappa} (1 - p_i)$, where $p_i$ is the probability associated with axiom $E_i$.
A \emph{selection} $\sigma$ is a total composite choice, i.e., it contains an atomic choice $(E_i,k)$ for every 
probabilistic axiom  of the theory. 
Thus a selection $\sigma$ identifies a \emph{world} in this way:
$w_\sigma=\cC\cup\{E_i|(E_i,1)\in \sigma\}$ where $\cC$ is the set of certain axioms. 
Let us indicate  with $\mathcal{W}_\cK$ the set of all worlds.
The probability of a world $w_\sigma$  is 
$P(w_\sigma)=P(\sigma)=\prod_{(E_i,1)\in \sigma}p_i\prod_{(E_i, 0)\in \sigma} (1-p_i)$.
$P(w_\sigma)$ is a probability distribution over worlds, i.e., $\sum_{w\in \mathcal{W}_\cK}P(w)=1$.

We can now assign probabilities to queries.
Given a world $w$ the probability of a query $Q$ is defined as $P(Q|w)=1$ if $w\models Q$ and 0 otherwise. The probability of a query can be obtained by marginalizing the joint probability of the query and the worlds $P(Q,w)$:
\begin{eqnarray}
P(Q)&=&\sum_{w\in \mathcal{W}_\cK}P(Q,w)\label{pq}\\
&=&\sum_{w\in \mathcal{W}_\cK} P(Q|w)P(w)\label{pq1}\\
&=&\sum_{w\in \mathcal{W}_\cK: w\models Q}P(w)\label{pq2}
\end{eqnarray}

\pagebreak
\begin{example} \label{people+pets2}
 Let us consider the knowledge base and the query $Q=kevin:natureLover$ of Example~\ref{people+pets} where some of the axioms are made probabilistic:
$$\begin{array}{clccl}
 (C_1)&\exists hasAnimal.Pet \sqsubseteq NatureLover &\ & (E_1)&0.4\ ::\ \fluffy: Cat\\
 (C_2)& (kevin,\fluffy):hasAnimal &\ & (E_2)&0.3\ ::\ tom: Cat\\
 (C_3)& (kevin,tom):hasAnimal &\ & (E_3)&0.6\ ::\ Cat\sqsubseteq Pet
 \end{array}$$
\noindent
 $\fluffy$ and $tom$ are cats and  cats are pets with the specified probabilities.
The KB has eight worlds and $Q$ is true in three of them, i.e.,
$$\{C_1, C_2, C_3, E_1, E_3\},\{C_1, C_2, C_3, E_2, E_3\},\{C_1, C_2, C_3, E_1, E_2, E_3\}.$$
These worlds corresponds to the selections:
$$\{(E_1,1),(E_2,0),(E_3,1)\},\{(E_1,0),(E_2,1),(E_3,1)\},\{(E_1,1),(E_2,1),(E_3,1)\}.$$
The probability is 
$P(Q) = 0.4\cdot 0.7\cdot 0.6 + 0.6\cdot0.3\cdot0.6+0.4\cdot 0.3\cdot0.6 = 0.348.$
\end{example}

\noindent 
TRILL \cite{ZesBelRig16-AMAI-IJ,Zese17-SSW-BK}
computes the probability of a query w.r.t. KBs that follow DISPONTE by first computing all the explanations for the query and then building a Binary Decision Diagram (BDD) that represents them.
An explanation is a subset of axioms $\kappa$ of a KB $\cK$ such that $\kappa \models Q$. Since explanations may contain also axioms that are irrelevant for proving the truth of $Q$, usually, minimal explanations\footnote{Also known as \emph{justifications}.} w.r.t. set inclusion are considered. This means that a set of axioms $\kappa\subseteq \cK$ is a minimal explanation if $\kappa\models Q$ and for all $\kappa' \subset \kappa$, $\kappa' \not\models Q$, i.e. $\kappa'$ is not an explanation for $Q$.
Therefore, consider $\kappa$ a minimal explanation, if we remove one of the axioms in $\kappa$, creating the set $\kappa'$, then $\kappa'$ is not an explanation, while if we add an axiom randomly chosen among those contained in the KB to $\kappa$, creating $\kappa''$, then $\kappa''$ is an explanation that is not minimal. From now on, we will consider only minimal explanations. For the sake of brevity, when we will mention explanations we will refer to minimal explanations.
An explanation can be represented with a composite choice. 
Given the set $K$ of all explanations for a query $Q$, we can define the  Disjunctive Normal Form (DNF) Boolean formula $f_K$ as
$f_K(\mathbf{X})=\bigvee_{\kappa\in K}\bigwedge_{(E_i,1)}X_{i}$.
The variables $\mathbf{X}=\{ X_i|p_i::E_i \in \cK\}$ are independent Boolean random variables with $P(X_i=1)=p_i$ and the probability  that $f_K(\mathbf{X})$ takes value 1 gives the probability of $Q$.
A BDD for a  function of Boolean variables is   
a rooted graph that has one level for each Boolean variable. 
A node $n$  has two children: one corresponding to the 1 value of the variable associated with the level of $n$
and one corresponding to the 0 value of the variable.
When drawing BDDs, the 0\--branch 
is distinguished from the 1-branch by drawing it with a dashed line.
The leaves store either 0 or 1.
BDD software packages take as input a Boolean function $f(\mathbf{X})$ and incrementally build the diagram so that isomorphic portions of it are merged, possibly changing the order of variables if useful. This often allows the diagram to have a number of nodes much smaller than exponential in the number of variables that a naive representation of the function would require.

Given the BDD, we can use the function \textsc{Prob} shown in Algorithm \ref{alg:prob} \cite{DBLP:journals/tplp/KimmigDRCR11}. This dynamic programming algorithm traverses the diagram from the leaves and computes the probability of a formula encoded as a BDD.
\begin{algorithm}[ht]
\caption{Function \textsc{Prob}: it takes a BDD encoding a formula and computes its probability.\label{alg:prob}} 
\begin{footnotesize}
\begin{algorithmic}[1]
\Function{Prob}{$node$, $nodesTab$}
\State Input: a BDD node $node$
\State Input: a table containing the probability of already visited nodes $nodesTab$
\State Output: the probability of the Boolean function associated with the node
\If{$node$ is a terminal}
\State return $value(node)$\Comment{$value(node)$ is 0 or 1}
\Else
\State scan $nodesTab$ looking for $node$
\If{found}
\State let $P(node)$ be the probability of $node$ in $nodesTab$
\State return $P(node)$
\Else
\State let $X$ be $v(node)$ \Comment{$v(node)$ is the variable associated with $node$}
\State $P_1\gets$\Call{Prob}{$child_1(node)$}
\State $P_0\gets$\Call{Prob}{$child_0(node)$}
\State $P(node)\gets P(X)\cdot P_1+(1-P(X))\cdot P_0 $
\State add the pair ($node$,$P(node)$) to $nodesTab$
\State return $P(node)$
\EndIf
\EndIf
\EndFunction
\end{algorithmic}
\end{footnotesize}
\end{algorithm}

\begin{example}[Example~\ref{people+pets2} cont.] 
\label{people+pets3}
 Let us consider the KB of Example~\ref{people+pets2}.
If we associate the random variables $X_{1}$ with axiom $E_1$, $X_{2}$ with $E_2$ and $X_3$ with $E_3$,
 the Boolean formula $f(\mathbf{X})=(X_{1}\wedge X_{3})\vee (X_{2}\wedge X_{3})$ represents the set of explanations. The BDD for such a function is shown in Figure \ref{dd}. 
By applying function \textsc{Prob} of Algorithm~\ref{alg:prob} to this BDD we get
\begin{eqnarray*}
\mbox{\textsc{Prob}}(n_3)&=&0.6\cdot 1+0.4\cdot 0=0.6\\
\mbox{\textsc{Prob}}(n_2)&=&0.4\cdot 0.6+0.6\cdot 0=0.24\\
\mbox{\textsc{Prob}}(n_1)&=&0.3\cdot 0.6+0.7\cdot 0.24=0.348
\end{eqnarray*}
and therefore $P(Q)=\mbox{\textsc{Prob}}(n_1)=0.348$, which corresponds to the probability given by the semantics.
\end{example}

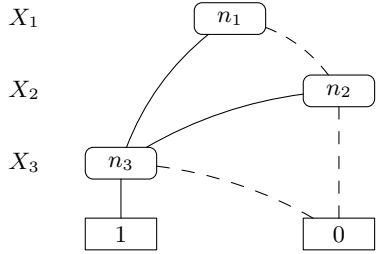
\begin{figure}
 $$
 \xymatrix@=5mm
 { X_{1} & &*=<27pt,12pt>[F-:<3pt>]{n_1}
 \ar@/_/@{-}[ldd] \ar@/^/@{--}[dr]\\ 
 X_{2}  & & &*=<27pt,12pt>[F-:<3pt>]{n_2} 
 \ar@/_/@{-}[dll]\ar@{--}[dd] 
 \\
 X_{3}& *=<27pt,12pt>[F-:<3pt>]{n_3}
 \ar@{-}[d] \ar@/^/@{--}[drr]  \\
 &*=<27pt,12pt>[F]{1} &&*=<27pt,12pt>[F]{0}}
 $$
 \caption{BDD representing the set of explanations for the query of Example  \ref{people+pets}. \label{dd}
}
 \end{figure}

\section{The Pinpointing Formula}
\label{sec:reasoning}
In \cite{DBLP:journals/jar/BaaderP10,DBLP:journals/logcom/BaaderP10} the authors consider the problem of finding a \emph{pinpointing formula} instead of a set of explanations. A pinpointing formula is a compact representation of the set of explanations.
To build a pinpointing formula, first we have to associate a unique propositional variable with every axiom $E$ of the KB $\cK$, indicated with $var(E)$. Let $var(\cK )$ be the set of all the propositional variables associated with axioms in $\cK$, then the pinpointing formula is a \emph{monotone Boolean formula} built using some or all of the variables in $var(\cK)$ and the conjunction and disjunction connectives. 
 A valuation $\nu$ of a set of variables $var(\cK)$ is the set of propositional variables that are true, i.e., $\nu \subseteq var(\cK)$. For a valuation $\nu \subseteq var(\cK)$, let $\cK_{\nu} := \{E \in \cK |var(E)\in\nu\}$.
 \begin{definition}[Pinpointing formula \cite{DBLP:journals/logcom/BaaderP10}]
 Given a query $Q$ and a KB $\cK$, a monotone Boolean formula $\phi$ over $var(\cK)$ is called a \emph{pinpointing formula} for $Q$ if for every valuation $\nu \subseteq var(\cK)$ it holds that $\cK_{\nu} \models Q$ iff
 $\nu$ satisfies $\phi$.
  
 \end{definition}
 In \cite{DBLP:journals/logcom/BaaderP10} the authors also discuss the relation between the pinpointing formula and explanations for a query $Q$.
 Let us denote the set of explanations for $Q$ by $Expls(\cK,Q) = \{\cK_{\nu} | \nu$ \textit{is a minimal valuation satisfying} $\phi\}$. $Expls(\cK,Q)$ can be obtained by converting the pinpointing formula into Disjunctive Normal Form (DNF) and removing disjuncts implying other disjuncts. However, the transformation to DNF may produce a formula whose size is exponential in the size of the original one. In addition, the correspondence holds also in the other direction: the formula
 $\bigvee_{Ex\in Expls(\cK,Q)} \bigwedge_{E\in Ex} var(E)$
 is a pinpointing formula.
 
 \begin{example}[Example~\ref{people+pets3} cont.]
 \label{people+pets4}
 Let us consider the KB $\cK$ and the query $Q$ of Example~\ref{people+pets2}. The set $Expls(\cK,Q) =\{\{C_2,E_1,E_3,C_1\},\{C_3,E_2,E_3,C_1\}\}$ corresponds to the pinpointing formula $(C_2\wedge E_1\wedge E_3\wedge C_1)\vee(C_3\wedge E_2\wedge E_3\wedge C_1)$.
  
 \end{example}
 One interesting feature of the pinpointing formula is that an exponential number of explanations can be represented with a much smaller pinpointing formula.
 \begin{example}
  \label{exp-expl}
  Given an integer $n\geq1$, consider the  KB containing the following axioms for $1\leq i\leq n$:
  $$\begin{array}{lcr}
     (C_{1,i})\ B_{i-1}\sqsubseteq P_i\sqcap Q_i&\ \ \ \ (C_{2,i})\ P_i\sqsubseteq B_i&\ \ \ \ (C_{3,i})\ Q_i\sqsubseteq B_i
    \end{array}$$
The query $Q= B_0\sqsubseteq B_n$ has $2^n$ explanations, even if the KB has a size that is linear in $n$.
For $n=2$ for example, we have 4 different explanations, namely
$$\begin{array}{l}
\{C_{1,1}, C_{2,1}, C_{1,2}, C_{2,2}\}\\
\{C_{1,1}, C_{3,1}, C_{1,2}, C_{2,2}\}\\
\{C_{1,1}, C_{2,1}, C_{1,2}, C_{3,2}\}\\
\{C_{1,1}, C_{3,1}, C_{1,2}, C_{3,2}\}
\end{array}$$
The corresponding pinpointing formula is $C_{1,1} \wedge (C_{2,1}\vee C_{3,1})\wedge C_{1,2}\wedge(C_{2,2}\vee C_{3,2})$. 
In general, given $n$, the formula for this example is
$$\bigwedge_{i\in\{1,n\}} C_{1,i} \wedge \bigwedge_{j\in\{1,n\}} \bigvee_{z\in\{2,3\}} C_{z,j}$$
\noindent
whose size is linear in $n$.
 \end{example}

  \subsection{The Tableau Algorithm for the Pinpointing Formula}
  
One of the most common approaches for performing inference in DL is the tableau algorithm~\cite{DBLP:journals/sLogica/BaaderS01}. A tableau is a graph where the nodes are individuals annotated with the concepts they belong to and the edges are annotated with the roles that relate the connected individuals. A tableau can also be seen as an ABox, i.e., a set of (class and role) assertions. This graph is expanded by applying a set of consistency preserving expansion rules until no more rules are applicable. However, some expansion rules are non-deterministic and their application results in a set of tableaux. Therefore, the tableau algorithm manages a forest of tableau graphs and terminates when all the graphs are fully expanded.

Extensions of the standard tableau algorithm allow the computation of explanations for a query associating sets of axioms representing the set of explanations to each annotation of each node and edge. The set of annotations for a node $n$ is denoted by $\cL(n)$, analogously, the set of annotations of an edge $(n,m)$ is denoted by $\cL(n,m)$. A recent extension represents explanations by means of a Boolean formula~\cite{DBLP:journals/logcom/BaaderP10}.
In particular every node (edge) annotation, which is an assertion $a=n:C$ ($a=(n,m):R$) with $C\in\cL(n)$ ($R\in\cL((n,m))$), is associated with a label $lab(a)$ that is a monotone Boolean formula over $var(\cK)$. In the initial tableau, every assertion $a\in \cK$ is labeled with variable $var(a)$, and assertion $\neg Q$ is added with label $\top$.

The tableau is then expanded by means of expansion rules. In \cite{DBLP:journals/logcom/BaaderP10} a rule is of the form
$$(B_0,S)\rightarrow\{B_1,...,B_l\}$$
where the $B_i$s are finite sets of assertions possibly containing variables and $S$ is a finite set of axioms. 
Assertions have variables for concepts, roles and individuals, when $B_0$ can be unified with an assertion in the tableau and the set of axioms $S\in \cK$, then the rule can be applied to the tableau. Before applying the rule, all variables in assertions in $B_i$ are instantiated.

\begin{example}
\label{people+pets-tab}
In this example we show the tableau algorithm in action on an extract of the KB  of Example~\ref{people+pets}, and the query $Q = kevin:natureLover$.
$$\begin{array}{clccl}
 (1)&\exists hasAnimal.Pet \sqsubseteq NatureLover &\ & (2)&tom: Cat\\
 (3)& (kevin,tom):hasAnimal &\ & (4)&Cat\sqsubseteq Pet
 \end{array}$$
  The initial tableau, shown on the left hand side of Figure~\ref{fig:tab-expansion}, contains the nodes for $kevin$ and $tom$. The node for $tom$ is annotated with the concept $Cat$ due to axiom $(2)$, while the node for $kevin$ is annotated with the concept $\neg NatureLover$, due to the query $Q$. Moreover, the edge between the two nodes is annotated with the role $hasAnimal$, due to axiom $(3)$. The final tableau, obtained after the application of the expansion rules, is shown on the right hand side of Figure~\ref{fig:tab-expansion}. In this tableau, the node for $tom$ is also annotated with the concept $Pet$, and the node for $kevin$ with the concepts $\exists hasAnimal.Pet$ and $NatureLover$.
  \begin{figure}[hbt]
$$\xymatrix@=6mm
{*=<120pt,20pt>[F-:<3pt>]{kevin\ \ :\ \ \parbox{55pt}{$\neg NatureLover$}}
\ar@/^/@{->}[dd]_{\parbox{50pt}{\begin{flushright}hasAnimal                                         \end{flushright}}}& && *=<120pt,40pt>[F-:<3pt>]{kevin\ \ :\ \ \parbox{65pt}{\footnotesize $\exists hasAnimal.Pet$\\$NatureLover$\\$\neg NatureLover$}}\ar@/^/@{->}[dd]_{\parbox{50pt}{\begin{flushright}hasAnimal\end{flushright}}}\\ 
&\ar@{=>}[r]&&\\
 *=<60pt,20pt>[F-:<3pt>]{tom\ \ :\ \ Cat}&&&*=<65pt,30pt>[F-:<3pt>]{tom\ \ :\ \ \parbox{15pt}{$Cat$\\$Pet$}}
}
$$
\caption{Expansion of the tableau for the KB of Example~\ref{people+pets-tab}.\label{fig:tab-expansion}}
\end{figure}
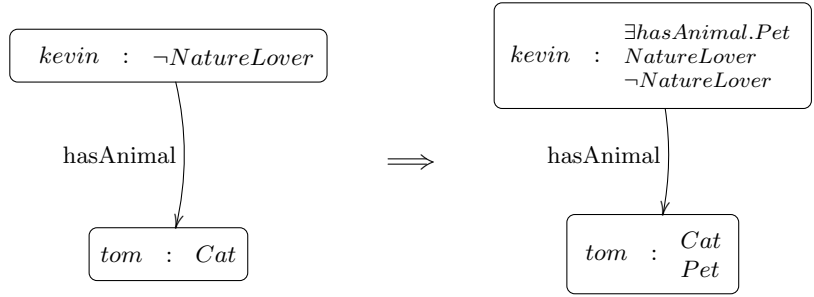
\end{example}

Rules can be divided into two sets: deterministic and non-deterministic. In the first type, $l=1$ and all the ground assertions in $B_1$ are inserted in the tableau to which the rule is applied, while in the second type $l>1$, meaning that it creates $l$ new tableaux, one for each $B_i$, and adds to the $i$-th tableau the ground assertions in $B_i$.

In order to explain the conditions that allow for the application of a rule we need first some definitions.

\begin{definition}
\label{def:psi-insert}
 Let $A$ be a set of labeled assertions and $\psi$ a monotone Boolean formula. The assertion $a$ is $\psi$-\textit{insertable} into $A$ if either $a\notin A$, or $a \in A$ but $\psi \not\models lab(a)$. 
 Given a set $B$ of assertions and a set $A$ of labeled assertions, the set of $\psi$-\textit{insertable} elements of $B$ into $A$ is defined as $ins_{\psi}(B,A):= \{b \in B | \textit{b is }\psi$-\textit{insertable}$\textit{ into A}\}$.

The result of the operation of $\psi$-\textit{insertion} of $B$ into $A$ is the set of labeled assertions $A\uplus_{\psi}B$ containing assertions in $A$ and those specified in $ins_{\psi}(B,A)$ opportunely labeled, i.e., the label of assertions in $A\setminus ins_{\psi}(B,A)$ remain unchanged, assertions in  $ins_{\psi}(B,A)\setminus A$ get label $\psi$ and the remaining $b_i$s get the label $\psi \vee lab(b_i)$.
\end{definition}

\begin{example}
 Consider the KB and the query of Example \ref{people+pets2}. After finding the first explanation for the query, which is $\{C_2,E_1,E_3,C_1\}$, the tableau contains the set of assertions 
 $A=\{\neg(kevin:NatureLover), kevin:NatureLover, \fluffy:Cat, tom:Cat\}$ 
 with labels $lab(\neg(kevin:NatureLover))=\top$, $lab(kevin:NatureLover)=C_2\wedge E_1\wedge E_3\wedge C_1$, $lab(\fluffy:Cat)=E_1$ and $lab(tom:Cat)=E_2$. Suppose we want to insert the assertion $kevin:NatureLover$ into $A$, and $\psi$ is $C_3\wedge E_2\wedge E_3\wedge C_1$. Since this formula does not imply $lab(kevin:NatureLover)$, then $kevin:NatureLover$ is $\psi$-\textit{insertable} into $A$ and its insertion changes the label $lab(kevin:NatureLover)$ to the disjunction of the two formulas, i.e., $lab(kevin:NatureLover)=((C_2\wedge E_1)\vee (C_3\wedge E_2) )\wedge E_3\wedge C_1$.
\end{example}

\noindent
We also need the concept of \emph{substitution}. A substitution is a mapping $\rho:V\rightarrow D$, where $V$ 
is a finite set of logical variables and $D$ is a countably infinite set of 
constants that contains all the individuals in the KB and all the anonymous individuals created by the application of the rules. A substitution can also be seen as a set of ordered couples in the obvious way.
Variables are seen as placeholders for individuals in the assertions. For example, an assertion can be $x:C$ or $(x,y):R$ where $C$ is a concept, $R$ is a role and $x$ and $y$ are variables.
Let $x:C$ be an assertion with variable $x$ and $\rho = \{ x\rightarrow c\}$ a substitution, then $(x:C)\rho$ denotes the assertion obtained by replacing variable $x$ with its $\rho$-\textit{image}, i.e. $(x:C)\rho=(c : C)$. A substitution $\rho'$ \emph{extends} $\rho$ if $\rho\subseteq\rho'$. A rule $(B_0,S)\rightarrow\{B_1,...,B_l\}$ can be applied to the tableau $T$ with a substitution $\rho$ on the variables occurring in $B_0$ if $S\subseteq \cK$, and $B_0\rho\subseteq A$. An applicable rule is applied by generating a set of $l$ tableaux with the $i$-th obtained by the $\psi$-\textit{insertion} of $B_{i}\rho'$ in $T$, where $\rho'$ is a substitution extending $\rho$. In the case of variables not occurring in $B_0$ (fresh variables), $\rho'$ instantiates them with new individuals which do not appear in the KB. These individuals are also called \emph{anonymous}.

\begin{example}
\label{ex:rule_application}
Consider, for example, the rule $\exists$ defined as 
$$(\{(x:\exists S.C),miss(z,\{(x,z):S,z:C\})\},\{\})\rightarrow\{\{anon(y),((x,y):S),(y:C)\}\}$$
 handling existential restrictions. Informally, ``\textit{if $(x:\exists S.C)\in A$, but there is no individual name $z$ such that $z:C$ and $(x,z):S$ in $A$, then $A=A\cup\{((x,y):S),(x:C)\}$ where $y$ is an individual name not occurring in $A$}''. If $A$ does not contain two assertions that match $((x,z):S),(z:C))$, a fresh variable $y$ is instantiated with a new fresh individual. Thus, if $A=\{x:\exists S.C, (a,b):S\}$ the rule can be applied to $A$ with substitution $\rho=\{x\rightarrow a, y\rightarrow c\}$ with $c$ a new anonymous individual. After the application of the rule $A'=A\cup\{(a,c):S, c:C\}$.
\end{example}

\noindent
However, the discussion above does not ensure that rules such as that of Example~\ref{ex:rule_application} are not applied again to $A'$ creating new fresh individuals. In fact,  just checking whether the new assertions are not contained in $A'$ does not prevent to re-apply the rule in the example to $A'$ generating $A'' = A'\cup\{(a,c'):R, c':C\}$.
This motivates the following definition for rule applicability.

\begin{definition}[Rule Applicability]
 \label{rule-app}
 Given a tableau $T$, a rule $(B_0,S)\rightarrow \{B_1,$ $...,B_l\}$ is applicable with a substitution $\rho$ on the variable occurring in $B_0$ if $S \subseteq \cK$, and $B_{0}\rho \subseteq A$, where $A$ is the set of
 assertions of the tableau, and, for every $1 \leq i \leq l$ and every substitution $\rho'$ on the variables occurring in $B_0 \cup B_i$ extending $\rho$ we have $B_{i}\rho'\nsubseteq A$.
\end{definition}

\noindent
We can now define also rule application.

\begin{definition}[Rule Application]
Given a forest of tableaux $\cF$ and a tableau $\cT\in\cF$ representing the set of assertions $A$ to which a rule is applicable with substitution $\rho$, the application of the rule leads to the new forest $\cF'=\cF\setminus\cT\cup_{i=1}^n\cT_i^\psi$. Each $\cT_i^\psi$ contains the assertions in $A\uplus_{\psi}B_{i}\rho'$, where $\rho'$ is a substitution on the variables occurring in $R$ that extends substitution $\rho$ and maps variables of $R$ to  new distinct anonymous individuals, i.e. individuals not occurring in $A$. The rule is applied for each possible $\rho$ given by $A$.
\end{definition}

\noindent
After the full expansion of the forest of tableaux, i.e., when no more rules are applicable to any tableau of the forest, the pinpointing formula is built from all the clashes in the tableaux. A clash is represented by two assertions $a$ and $\neg a$ present in the tableau.

\begin{example}
 Consider Figure~\ref{fig:tab-expansion}. In the final tableau, the node for $kevin$ is annotated with the concepts $NatureLover$ and $\neg NatureLover$. This is a clash, meaning that the query $Q=kevin:NatureLover$ is true w.r.t. the KB of the Example~\ref{people+pets-tab}.
\end{example}

The pinpointing formula is built by first conjoining, for each clash, the labels of the two clashing assertions, then by disjoining the formulas for every clash in a tableau and finally by conjoining the formulas for each tableau.

In order to ensure termination of the algorithm, blocking must be used.

\begin{definition}[Blocking]\label{def:blocking}
 Given a node $N$ of a tableau, $N$ is blocked iff either $N$ is a new node generated by a rule, it has a predecessor $N'$ which contains the same annotations of $N$ and the labels of these annotations are equal, or its parent is blocked.
\end{definition}

\begin{example}
\label{tab-block}
 Let us consider the following KB.
$$\begin{array}{cl}
 (1)&C \sqsubseteq \exists R.C\\
 (2)&a:C
 \end{array}$$
  The initial tableau, shown on the left hand side of Figure~\ref{fig:tab-blocking}, contains only the node for $a$, annotated with $C$. After the application of the $\textit{unfold}$ rule, using axiom $(1)$, and of the $\exists$ rule, explained in Example~\ref{ex:rule_application}, the resulting tableau is shown on the right hand side of Figure~\ref{fig:tab-blocking}. The tableau has a new node corresponding to an anonymous individual $an_1$, which has the same annotations of its predecessor $a$. The node for $an_1$ is blocked according to Definition~\ref{def:blocking}, because further expansion of this node would lead to the creation of an infinite chain of nodes associated to new anonymous individual, all containing the same annotations $C$ and $\exists R.C$.
  \begin{figure}[hbt]
$$\xymatrix@=6mm
{
& && *=<60pt,30pt>[F-:<3pt>]{a\ \ :\ \ \parbox{20pt}{\footnotesize $\exists R.C$\\$C$}}\ar@/^/@{->}[dd]_{\parbox{50pt}{\begin{flushright}R\end{flushright}}}\\ 
*=<60pt,20pt>[F-:<3pt>]{a\ \ :\ \ C}&\ar@{=>}[r]&&\\
 &&&*=<65pt,30pt>[F-:<3pt>]{an_1\ \ :\ \ \parbox{20pt}{\footnotesize $\exists R.C$\\$C$}}
}
$$
\caption{Expansion of the tableau for the KB of Example~\ref{tab-block}.\label{fig:tab-blocking}}
\end{figure}
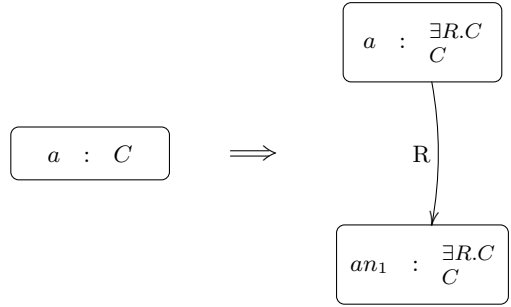
\end{example}

Then, a new definition of applicability must be given.
\begin{definition}[Rule Applicability with Blocking]
A rule is applicable if it is so in the sense of Definition~\ref{rule-app}. Moreover, if the rule adds a new node to the tableau, the node $N$ annotated with the assertion to which the rule is applied must be not blocked.
\end{definition}

\begin{theorem}[Correctness of Pinpointing Formula \cite{DBLP:journals/logcom/BaaderP10}]\label{th:corr-pin-form}
Given a KB $\cK$ and a query $Q$, for every chain of rule applications resulting in a fully expanded forest $\cF_n$, the formula built as indicated above is a pinpointing formula for the query $Q$.
\end{theorem}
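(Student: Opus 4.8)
The plan is to prove the two directions of the biconditional in the definition of pinpointing formula simultaneously, by fixing an arbitrary valuation $\nu\subseteq var(\cK)$ and relating the labeled forest $\cF_n$ to a run of the ordinary (unlabeled) tableau algorithm on the sub-KB $\cK_\nu$. Write $\phi$ for the formula built from $\cF_n$, so that $\phi=\bigwedge_{\cT}\bigvee_{(a,\neg a)}(lab(a)\wedge lab(\neg a))$, where the conjunction ranges over the tableaux $\cT$ of $\cF_n$ and the disjunction over the clashes of $\cT$. The central object is the \emph{$\nu$-projection} $\cF_n^\nu$, obtained from $\cF_n$ by discarding every assertion $a$ with $\nu\not\models lab(a)$ and then forgetting labels. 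I would establish three facts, from which the theorem follows at once: (i) $\cF_n^\nu$ is a fully expanded forest produced by the ordinary tableau algorithm on input $\cK_\nu$ (with $\neg Q$ added); (ii) by soundness and completeness of that ordinary algorithm, $\cK_\nu\models Q$ iff every tableau of $\cF_n^\nu$ contains a clash; and (iii) every tableau of $\cF_n^\nu$ contains a clash iff $\nu$ satisfies $\phi$.

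Facts (ii) and (iii) are the easy steps. Fact (ii) is exactly the correctness of the standard tableau calculus for \shi, which I would invoke as known. Fact (iii) is essentially bookkeeping: a clash $(a,\neg a)$ of a tableau $\cT$ survives into $\cT^\nu$ precisely when $\nu\models lab(a)$ and $\nu\models lab(\neg a)$, that is, when $\nu\models lab(a)\wedge lab(\neg a)$; unfolding the definition of $\phi$, $\nu\models\phi$ holds iff every $\cT$ has at least one clash satisfying this, which is the same as saying every $\cT^\nu$ contains a clash.

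The real work is Fact (i), which I would split into \emph{soundness of the projection} and \emph{saturation of the projection}. For soundness I would argue by induction on the chain of rule applications that every assertion surviving in $\cF_n^\nu$ is derivable by the ordinary algorithm on $\cK_\nu$. The base case holds because an initial assertion $a\in\cK$ carries label $var(a)$, so it survives iff $var(a)\in\nu$, i.e.\ iff $a\in\cK_\nu$, while $\neg Q$ carries label $\top$ and always survives. For the inductive step, whenever a pinpointing rule inserts $B_i\rho'$ with label $\psi$, the label $\psi$ is the conjunction of the labels of the premises $B_0\rho$ together with the variables of $S$; hence $\nu\models\psi$ forces $\nu$ to satisfy all premise labels and all axiom variables in $S$, so the premises already lie in $\cF_n^\nu$ and the corresponding ordinary rule is applicable there. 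For saturation I would show that no ordinary rule applies to $\cF_n^\nu$. Suppose some ordinary rule were applicable with premises present and their labels satisfied by $\nu$, giving premise label $\psi$ with $\nu\models\psi$. Because $\cF_n$ is fully expanded, the corresponding pinpointing rule is \emph{not} applicable, which by Definitions~\ref{def:psi-insert} and~\ref{rule-app} means that no conclusion assertion $b$ is $\psi$-insertable, i.e.\ every such $b$ already occurs with $\psi\models lab(b)$. Combining $\nu\models\psi$ with $\psi\models lab(b)$ yields $\nu\models lab(b)$, so $b$ survives into $\cF_n^\nu$ and the ordinary rule's conclusion is already present. This is exactly the point where the design of $\psi$-insertion pays off.

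I expect the saturation half of Fact (i) to be the main obstacle, since it is where the $\psi$-insertability condition and the ``fully expanded'' hypothesis must be matched precisely against ordinary applicability, and where one must be careful that labels only ever grow (via the disjunction in $A\uplus_\psi B$) so that the implication $\psi\models lab(b)$ established at the moment of a blocked application is preserved to the end of the run. A secondary technical point, handled in passing, is blocking: since Definition~\ref{def:blocking} compares both annotations and their labels, a node blocked in $\cF_n$ projects to a node that is legitimately blocked in $\cF_n^\nu$, so the projection inherits termination and no spurious expansion is required. Once these are in place, chaining (i)--(iii) gives $\cK_\nu\models Q$ iff $\nu\models\phi$ for every $\nu$, which is precisely the statement that $\phi$ is a pinpointing formula for $Q$.
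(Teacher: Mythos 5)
First, a point of reference: this paper never proves Theorem~\ref{th:corr-pin-form} --- it is imported from \cite{DBLP:journals/logcom/BaaderP10} --- so your proposal can only be compared with the proof in that source. Your skeleton (fix $\nu$, project the labeled forest onto $\nu$, prove the projection saturated, invoke correctness of the ordinary calculus, do the clash bookkeeping) is indeed the skeleton of that proof, and Facts (ii) and (iii) are fine. Your saturation half is also essentially right, with two caveats: non-applicability only yields, for \emph{some} branch $B_i\rho'$, that every $b\in B_i\rho'$ lies in $A$ with $\psi\models lab(b)$ (not ``no conclusion assertion is $\psi$-insertable''), which is however exactly what you need; and this inference requires the pinpointing applicability condition of Baader--Pe\~naloza, which this paper's Definition~\ref{rule-app} does not actually state --- it only gives the unlabeled condition $B_i\rho'\nsubseteq A$, under which ``fully expanded'' would not imply $\psi\models lab(b)$ at all. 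You silently (and correctly) substituted the intended definition.

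The genuine gap is the soundness half of Fact (i), which is false as stated: $\cF_n^\nu$ need not be a forest ``produced by the ordinary tableau algorithm on $\cK_\nu$'', and your inductive step --- ``the premises already lie in $\cF_n^\nu$ and the corresponding ordinary rule is applicable there'' --- breaks. Ordinary applicability (Definition~\ref{rule-app}) also requires $B_i\rho'\nsubseteq A$ for \emph{every} branch, and a branch can be fully contained in the projection without being $\psi$-covered. Concretely, suppose the KB contains the assertions $a:(C_1\sqcup C_2)$, with variable $x_1$, and $a:C_1$, with variable $x_2$. Then $\psi=x_1\not\models x_2$, so the pinpointing $\sqcup$-rule is applicable and produces a second tableau containing $a:C_2$ labeled $x_1$; but for $\nu=\{x_1,x_2\}$ the ordinary $\sqcup$-rule is \emph{not} applicable to the projection, since the branch $\{a:C_1\}$ is already contained in it. The projected forest thus acquires a tableau ($A_\nu\cup\{a:C_2\}$) that no ordinary run on $\cK_\nu$ can ever produce --- in this example $\{A_\nu\}$ may itself already be saturated, hence the only reachable forest --- so Fact (ii), the correctness of the ordinary algorithm, cannot be applied to $\cF_n^\nu$ as if it were a genuine run, and your chain (i)--(iii) does not close. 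This is precisely the difficulty the cited proof is built around: it does not identify the projection with a run, but relates projected tableaux to genuinely reachable ones by containment in both directions, using that clashes are inherited by supersets and clash-freeness by subsets. Alternatively, for the concrete \shi calculus one can bypass reachability altogether: for one direction, a saturated clash-free projected tableau yields a canonical model of $\cK_\nu$ plus $\neg Q$; for the other, satisfiability of at least one projected tableau is preserved along the pinpointing chain by the \emph{semantic} soundness of each rule (which holds regardless of applicability conditions). Either repair fills the hole; without one of them the proof is incomplete at its central step.
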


 \noindent
 This approach is correct and terminating for the DL $\mathcal{SHI}$. 
 Number restrictions and nominal concepts cannot be handled by this definition of the tableau algorithm because of the definitions of rule and rule application. In fact tableau expansion rules for DLs with these constructs may merge some nodes, operation that is not allowed by the approach presented above. The authors of~\cite{DBLP:journals/logcom/BaaderP10} conjecture that the approach can be extended to deal with such constructs but, to the best of our knowledge, this conjecture has not been proved yet.
 
Until now, we have not considered transitivity axioms nor role inclusion axioms. To do so, the definition of $R$-\emph{successor} must be given.
\begin{definition}
\label{def:r-suc}
Given a role $R$, an individual $y$ is called $R$-\emph{successor} of an individual $x$ iff there is an assertion $(x,y):S$ for some sub-role $S$ of $R$.
\end{definition}
Note that, each role $R$ is a sub-role of itself. Following Definition~\ref{def:r-suc}, every assertion $(x,y):R$ indicates that $y$ is an $R$-\emph{successor} of $x$.
\begin{example}
Consider a KB containing, among the others, the following axioms:
$$\begin{array}{clclcl}
 &S\sqsubseteq R&\ &S_1\sqsubseteq S&\ &S_2\sqsubseteq S_1
 \end{array}$$
the assertion $(x,y):R$ means that $y$ is an $R$-\emph{successor} of $x$ and, therefore, that there is also the assertion $(x,y):S$, and, since $y$ is an $S$-\emph{successor} of $x$, recursively $(x,y):S_1$ and $(x,y):S_2$ as well.
\end{example}

Definition~\ref{def:r-suc} is used to deal with role inclusion when considering quantified concepts ($\exists R.C$ and $\forall R.C$) in order to correctly manage subsumption ($\exists R.C \sqsubseteq \exists S.C$ if $R\sqsubseteq S$).
The expansion rules for the tableau algorithm extended with pinpointing formula and management of $R$-\emph{successors} are shown in Figure~\ref{table:rules}. Here, $atomic(C)$ and $complex(C)$ indicate that concept $C$ is an atomic concept and a complex concept respectively. Moreover, $miss(z,Ass)$ means that there is not any individual $z$ such that the set of assertions containing $miss(z,Ass)$ does not contain the assertions defined in $Ass$. Finally, $anon(y)$ adds a new anonymous individual to the set of assertions.

 \begin{figure}[htb]
\begin{minipage}{\textwidth}
\begin{small}
\begin{tabbing}
aaa\=aaa\=aaa\=\kill
\textbf{Deterministic rules:}\\
$\textit{unfold}$: $(\{(x:C)\},\{atomic(C),(C \sqsubseteq D)\}) \rightarrow \{\{(x:D)\}\}$\\
\\
$\textit{CE}$: $(\{\},\{complex(C),(C \sqsubseteq D)\}) \rightarrow \{\{(x:(\neg C \sqcup D)| x \in \mathbf{I})\}\}$\\
\\
$\sqcap$: $(\{(x:(C_{1} \sqcap C_{2}))\},\{\}) \rightarrow \{\{(x:C_{1}),(x:C_{2})\}\}$\\
\\
$\exists$: $(\{(x:\exists S.C),miss(z,\{(x,z):S,z:C\})\},\{\})\rightarrow\{\{anon(y),((x,y):S),(y:C)\}\}$\\
\\
$\forall$: $(\{(x:\forall S.C)\},\{((x,y):S)\}) \rightarrow \{\{(y:C)\}\}$\\
\\
$\forall^{+}$: $(\{(x:\forall S.C)\},\{((x,y):R),(Trans(R)),(R\sqsubseteq S)\}) \rightarrow \{\{(y:\forall R.C)\}\}$\\
\\
\textbf{Non-deterministic rules:}\\
$\sqcup$: $(\{(x:(C_{1} \sqcup C_{2}))\},\{\}) \rightarrow \{\{(x:C_1)\},\{(x:C_2)\}\}$\\
\end{tabbing}

\end{small}
\end{minipage}
\caption{Tableau expansion rules for DL $\mathcal{SHI}$~\protect\cite{DBLP:journals/sLogica/BaaderS01}. 
For each rule, the name and formal definition are shown. 
In the rules, on the left of the arrow there are the assertions already present in the tableau, the axioms and the conditions necessary for the rule to be executed. On the right, there are the new assertions to be added in the tableau.
\label{table:rules}}
\end{figure}

As reported in~\cite{DBLP:journals/sLogica/BaaderS01}, the \textit{unfold} rule considers only subsumption axioms where the sub-class $C$ is an atomic concept. The \textit{CE} rule is used in the case that the sub-class $C$ is not atomic, in such a case the \textit{unfold} rule might lead to an exponential blow-up. The \textit{CE} rule applies every subsumption axiom where the sub-class is complex to every individual of the KB.

While the $\sqcup$ and $\sqcap$ rules are easily understandable, the $\exists$ rule ensures that there exists at least one individual connected to $x$ by role $R$ belonging to class $C$. The $\forall$ rule ensures that every individual connected to $x$ by role $R$ belongs to the concept $C$ specified by the assertion, while the $\forall^{+}$ ensures that the effects of universal restrictions are propagated as necessary in the presence of non-simple roles. It basically adds $y:\forall R.C$ iff $y$ is an $R$-\emph{successor} of $x$ such that $x:\forall S.C$ is included in the set of initial assertions and $R$ is a transitive sub-role of~$S$.

We refer to  \cite{DBLP:journals/sLogica/BaaderS01} for a detailed discussion on the tableau algorithm for DLs and its rules.

\section{\trillpbdd}
\label{sec:systems}

%

As TRILL and \trillp, \trillpbdd implements the tableau algorithm described in the previous section. In particular, \trillpbdd\ shares the same basis of \trillp\ because they both build the pinpointing formula representing the answer to queries. 
Differently from \trillp, \trillpbdd labels the assertions with a BDD representing the pinpointing formula instead of  the formula itself.
$\psi$-\textit{insertabili\-ty} can be checked in this case without resorting to a SAT solver.
In fact, suppose the tableau contains assertion $A$ labeled with BDD $B$, and we want to add BDD $B'$ to the label of assertion $A$, where $B'$ represents the formula $\psi$.
If $A$ is $\psi$-\textit{insertable}, the result is that assertion $A$ in the tableau will have the BDD obtained by disjoining $B$ and $B'$, $B\vee B'$, as label. $A$ is $\psi$-\textit{insertable} if $B'\not\models B$. We have that $B' \models B \Leftrightarrow B\vee B'\equiv B$.
Since BDDs are a canonical representation of Boolean formulas, $B\vee B'\equiv B$ iff $B\vee B'= B$, so we can avoid the SAT test by computing the disjunction of BDDs and checking whether the result is the same as the first argument, i.e., the two BDDs represent the same Boolean formula or, in other words, they represent two Boolean formulas which have the same truth value. If this is not the case, we can insert the formula in the tableau with BDD $B\vee B'$ which is already computed. 

\begin{theorem}[TORNADO's Correctness]
 Given a KB $\cK$ and a query $Q$, the probability value returned by TORNADO when answering query $Q$ corresponds to the probability value for the query $Q$ computed accordingly to the DISPONTE semantics.
\end{theorem}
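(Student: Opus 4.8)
The plan is to factor the claim into three links and then chain them: (i) the BDD that \trillpbdd attaches to the clashes of the fully expanded forest represents a pinpointing formula for $Q$; (ii) every pinpointing formula for $Q$ characterizes exactly the worlds that entail $Q$; and (iii) the function \textsc{Prob} of Algorithm~\ref{alg:prob}, run on that BDD with weight $P(X_i)=p_i$ for each probabilistic axiom $E_i$ and weight $1$ for each certain axiom, returns the probability that the pinpointing formula evaluates to true, which by (ii) equals $P(Q)$ as defined by Equations~\eqref{pq}--\eqref{pq2}.

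For link (i) I would first observe that \trillpbdd performs exactly the same tableau expansion as \trillp---the rules of Figure~\ref{table:rules}, with applicability and application as in Definition~\ref{rule-app}---the only difference being that each label $lab(a)$ is stored as a BDD rather than as the monotone Boolean formula itself. I would then prove, by induction on the chain of rule applications, that for every assertion $a$ in every tableau of the forest the BDD labelling $a$ in \trillpbdd represents the same Boolean function as the formula labelling $a$ in \trillp. The base case is immediate, since both systems start from the variables $var(a)$ and the label $\top$. For the inductive step the key point, already sketched before the statement, is that the BDD $\psi$-insertability test is sound and complete: because BDDs are a canonical representation of Boolean functions under a fixed variable ordering, $B'\models B$ holds iff $B\vee B'=B$, so the BDD equality check decides $\psi\models lab(a)$ (equivalently $B'\models B$) exactly as the SAT test does in \trillp, while the disjunction and conjunction of BDDs compute the corresponding Boolean connectives on the represented functions. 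Hence the $\psi$-insertion of Definition~\ref{def:psi-insert} is mirrored faithfully, each rule produces matching labels, and at the end the BDD assembled from the clashes represents the same function as the formula assembled by \trillp, which is a pinpointing formula for $Q$ by Theorem~\ref{th:corr-pin-form}.

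For link (ii) I would use the definition of pinpointing formula directly. Since certain axioms belong to every world, I restrict attention to valuations $\nu\supseteq\{var(E)\mid E\in\cC\}$; such a $\nu$ is in bijection with a selection $\sigma$ over the probabilistic axioms, and $\cK_\nu=w_\sigma$. By definition, $\nu$ satisfies $\phi$ iff $\cK_\nu\models Q$, that is, iff $w_\sigma\models Q$. Taking the expectation over the independent variables $\mathbf{X}$ (with $P(X_i=1)=p_i$ and the certain-axiom variables deterministically true), the probability that $\phi$ is satisfied is $\sum_{\sigma:\,w_\sigma\models Q}P(\sigma)=\sum_{w\models Q}P(w)=P(Q)$ by Equation~\eqref{pq2}. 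For link (iii) I would invoke the correctness of \textsc{Prob} for computing the probability of a function encoded as a BDD; feeding it the BDD of link (i) with the weights above yields exactly this probability, closing the chain. I expect link (i)---verifying that the BDD operations faithfully simulate the pinpointing-formula tableau, and in particular that the canonicity-based $\psi$-insertability test coincides with the SAT-based one of \trillp---to be the main obstacle, since it is the only place where \trillpbdd genuinely departs from the already-established \trillp.
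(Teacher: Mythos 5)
Your proposal is correct and takes essentially the same route as the paper's own proof: invoke Theorem~\ref{th:corr-pin-form} for correctness of the pinpointing formula, argue that the BDDs built by \trillpbdd{} during expansion faithfully represent that formula (the paper makes the canonicity argument for the $\psi$-insertability test in the text immediately preceding the theorem rather than inside the proof), and conclude that the probability computed from the resulting BDD matches the DISPONTE semantics. Your write-up is substantially more detailed than the paper's brief sketch---the explicit induction over rule applications and the valuation-to-world bijection with certain-axiom variables fixed to true are left implicit there---but the decomposition and the key ideas coincide.
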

\begin{proof}
The proof of this theorem follows from Theorem~\ref{th:corr-pin-form}. Since the pinpointing formula of query $Q$ w.r.t. the KB $\cK$ corresponds to the set $Expls(\cK,Q)$ of explanations, also their translation into BDDs is equivalent. TORNADO implements the tableau algorithm computing the pinpointing formula and represents such formula directly with BDDs built during  inference,  hence the probability computed from $B$ is correct w.r.t. the semantics.
\end{proof}

\subsection{Implementation of \trillpbdd}
First, we describe the common parts of \trillpbdd, \trillp\ and TRILL and then we show the differences.
The code of all three systems is available at \url{https://github.com/rzese/trill} and can be tested online with the TRILL on SWISH web application at \url{http://trill.ml.unife.it/}.
Figure~\ref{fig:trill-on-swish} shows the TRILL on SWISH interface.
\begin{figure}
 \includegraphics[width=1\textwidth]{./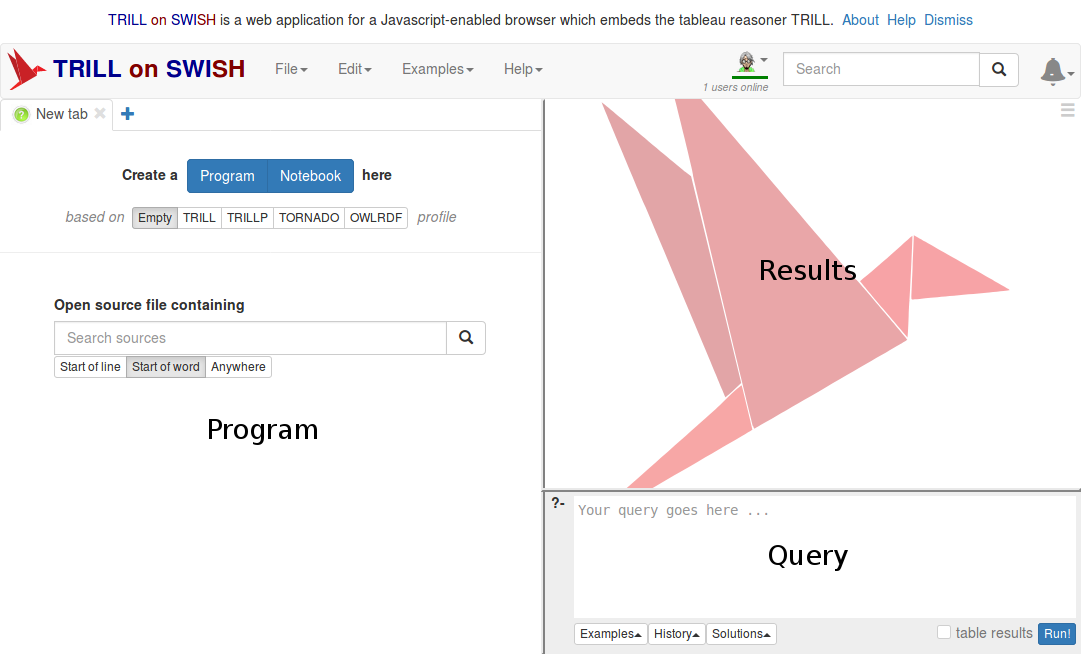}
 \caption{TRILL on SWISH interface.\label{fig:trill-on-swish}}
\end{figure}

All systems allow the use of two different syntaxes for axioms: OWL/RDF and Prolog.
The first can be used by exploiting the predicate \verb|owl_rdf/1|, whose argument is a string containing the KB in OWL/RDF.
The Prolog syntax is borrowed fro the Thea\footnote{\url{http://vangelisv.github.io/thea/}} library,  similar to the Functional-Style Syntax of OWL~\cite{motik2009owl} and represents axioms as Prolog atoms. For example, the axiom
$$Cat\sqsubseteq Pet$$
stating that \textit{cat} is subclass of \textit{pet} can be expressed as
\begin{verbatim}
subClassOf(cat,pet)
\end{verbatim}
while the axiom
$$Pet \equiv (Animal \sqcup \neg Wild)$$
stating that \textit{pet} is equivalent to the intersection of classes \textit{animal} and not \textit{wild} can be expressed as:
\begin{verbatim}
equivalentClasses([pet,
           intersectionOf([animal,complementOf(wild)])])
\end{verbatim}

\noindent In order to represent the tableau, the systems use a pair $Tableau=(A, T)$, where $A$ is a list containing assertions labeled with the 
corresponding pinpointing formula and 
$T$ is a triple ($G$, $RBN$, $RBR$) in which $G$ is a directed graph that encodes the structure of the tableau, 
$RBN$ is a red-black tree (a key-value dictionary), where a key is a pair of individuals and its value is the set of roles that connect the two individuals, and 
$RBR$ is a red-black tree, where a key is a role and its value is the set of pairs of individuals that are linked by the role.
These structures are built and handled by using two Prolog built-in libraries, one tailored for unweighted graphs, used for the structure of the tableau $G$, and one for red-black trees, used for the two dictionaries $RBN$ and $RBR$.
From the data structure $T$ we can quickly find the information needed during the execution of the tableau algorithm and check blocking conditions through predicates  
\texttt{nominal/2} and \texttt{blocked/2}. 
These predicates take as input a nominal individual $\mathit{Ind}$ and a tableau $(A, T)$.
For each  individual $\mathit{Ind}$ in the ABox, the  atom $nominal(\mathit{Ind})$ is added to $A$ in the initial tableau in order to rapidly check whether a node is associated with an anonymous individual or not.

All non-deterministic rules are implemented using a predicate of the form\linebreak $rule\_name(Tab0, TabList)$, that takes as input the current tableau $Tab0$ and returns the list of tableaux $TabList$ created by the application of the rule to $Tab0$.
Deterministic rules are 
implemented by a predicate $rule\_name(Tab0,Tab)$ that returns a single tableau
$Tab$ after the application of $rule$ to $Tab0$.

Since the order of rule application does not influence the final result, deterministic rules are applied first and then the non-deterministic ones in order to delay as much as possible the generation of new tableaux. Among deterministic rules, $\forall$, $\forall^{+}$, and $\exists$ are applied as last rules~\cite{DBLP:journals/sLogica/BaaderS01}.
After the application of a deterministic rule, a cut avoids backtracking to other possible choices for the deterministic rules. 
Then, non-deterministic rules are tried sequentially. After the application of a non-deterministic rule, a cut is performed to avoid backtracking to other rule choices and a tableau from the list is 
non-deterministically chosen with \texttt{member/2}. 
If no rule is applicable, rule application stops and returns the current tableau, otherwise a new round of rule application is performed.

The labels of assertions are combined in \trillp using functors \verb|*/1| and \verb|+/1| representing conjunction and disjunction respectively. Their argument is the list of operands. For example the formula of Example~\ref{people+pets4} can be represented as
\begin{center}
\texttt{+([$C_1$,$E_3$,+([*([$C_2$,$E_1$]),*([$C_3$,$E_2$])])])} 
\end{center}

\noindent
$\psi$-\textit{insertability} is checked in \trillp by conjoining the formula we want to add with the negation of the formula labeling the assertion in the tableau. If the resulting formula is satisfiable, then the assertion is $\psi$-\textit{insertable}. Predicate \texttt{test/2} checks $\psi$-\textit{insertability}: it takes as input the two formulas and calls a satisfiability library after having transformed the formulas into a suitable format.


The Boolean pinpointing formula returned by \trillp is then translated into a BDD from which the probability can be computed. 

As already seen, \trillpbdd\ avoids the steps just described by directly building BDDs. $\psi$-\textit{insertability} is checked by disjoining the current label of assertion and the new BDD found and checking whether the resulting BDD is different from the original label of the assertion. Finally, when \trillpbdd\ ends the computation of the query, the corresponding BDD is already built and can be used to calculate the probability of the query.


 BDDs are managed in Prolog by using a library developed for the system PITA~\cite{RigSwi10-ICLP10-IC,RigSwi11-ICLP11-IJ}, which interfaces Prolog to the CUDD library\footnote{\url{http://vlsi.colorado.edu/~fabio/CUDD/}} for manipulating BDDs. The PITA library offers predicates for performing Boolean operations between BDDs. 
Note that BDDs are represented in  Prolog with pointers to their root node and checking equality between BDDs can be performed by checking equality between two pointers which is constant in time.
Thus, the \texttt{test/2} predicate has only to update the BDD and check if the new BDD is different from the original one. This test is necessary to avoid entering in an infinite loop where the same assertion is inserted infinitely many times.
The code of \trillpbdd's \texttt{test/2} predicate is shown below. 
\begin{verbatim}
test(BDD1,BDD2,F) :-   % BDD1 is the new BDD,
                       % BDD2 is the BDD already in the tableau
    or_f(BDD1,BDD2,F), % combines BDD1 and BDD2 to create BDD F
    BDD2 \== F.        % checks if F is different from BDD2
\end{verbatim}

The time taken by Boolean operations between BDDs and the size of the results depend on the ordering of the variables. A smart order can significantly reduce the time and size of the results. However, the problem of finding the optimal order is coNP-complete \cite{Bryant:1986:GAB:6432.6433}. For this reason, heuristic methods are used to choose the ordering. CUDD for example offers symmetry detection or genetic algorithms. Reordering can be executed when the user requests it or automatically by the package when the number of nodes reaches a certain threshold. The threshold is initialized and automatically tuned after each reordering. We refer to the documentation\footnote{\url{http://www.cs.uleth.ca/~rice/cudd_docs/}} of the library for detailed information about each implemented heuristic.

 It is important to note that CUDD groups BDDs in environments called BDD managers. We use a single BDD manager for each query. When a reordering is made, all the BDDs of the BDD manager are reordered. So the difference test can compare the two pointers.

For \trillpbdd\ we chose the group sifting heuristic \cite{DBLP:conf/iccad/PandaS95} for the order selection, natively available in the CUDD package. 


TORNADO never forces the reordering and uses CUDD automatic dynamic reordering. 
However, as one can see from the experimental results presented in the next section, \trillpbdd\ is able to achieve good results using the default settings.

\section{Experiments}
\label{sec:exp}

We performed two experiments, the first one regarding non-probabilistic inference, the second one regarding probabilistic inference. 

In the first experiment we compared TRILL, \trillp, \trillpbdd, and BORN with the non-probabilistic reasoners Pellet~\cite{DBLP:journals/ws/SirinPGKK07}, Konclude\footnote{\url{http://derivo.de/produkte/konclude/}}~\cite{StLG14a}, HermiT~\cite{shearer2008hermit}, Fact++~\cite{tsarkov2006fact++}, and JFact\footnote{\url{http://jfact.sourceforge.net/}}.
Konclude can check the consistency of a KB and satisfiability of concepts, define the class hierarchy of the KB and find all the classes to which a given individual belongs. However, it cannot directly answer general queries or return explanations. On the other hand, Pellet, HermiT, Fact++, and JFact answer general queries and can be used for returning all  explanations. 
To find explanations, once the first one is found, Pellet, HermiT, Fact++, and JFact use the Hitting Set Tree (HST) algorithm~\cite{DBLP:journals/ai/Reiter87} to compute the others by repeatedly removing axioms one at a time and invoking the reasoner. This algorithm is implemented in the OWL Explanation library\footnote{\url{https://github.com/matthewhorridge/owlexplanation}}~\cite{Horridge09theowl}. 

 Basically, it takes as input one explanation, randomly chooses one axiom from the explanation and removes it form the KB. At this point the HST algorithm calls the reasoner to try to find a new explanation w.r.t. the reduced KB. If a new explanation is found, a new axiom from this new explanation is selected and removed from the reduced KB, trying to find a new explanation. Otherwise, the removed axiom is added to the reduced KB and a new axiom is selected to be removed from the last explanation found. The HST algorithm stops when all the axioms form all the explanations have been tested. 
Therefore, to find a new explanation at every iteration, OWL Explanation, for HermiT, Fact++ and JFact, uses a black box approach, i.e. a reasoner-independent approach. Whereas Pellet uses a built-in approach to find them, which is, however, the HST algorithm implemented in OWL Explanation slightly modified. On the other hand, Konclude does not implements the OWL API interface that we used for the implementation of the black box algorithm. Moreover, since it does not return explanations, the black box approach described above cannot be directly applied. In order to use Konclude for finding all possible explanations would require significant development work with a careful tuning of the implementation, which is outside of the scope of this paper. Therefore, we decided to include Konclude only in tests where we are not interested in finding all the explanations.

In the second experiment we compared TRILL, \trillp, \trillpbdd, BORN, BUNDLE and PRONTO. 
While TRILL, \trillp, BUNDLE and \trillpbdd\ all follow the DISPONTE semantics, PRONTO and BORN are based on different semantics.
PRONTO uses  P-$\mathcal{SHIQ}(\mathbf{D})$ \cite{DBLP:journals/ai/Lukasiewicz08}, a language based on Nilsson's probabilistic logic \cite{DBLP:journals/ai/Nilsson86}, that defines probabilistic interpretations instead of a single probability distribution over theories (such as DISPONTE).
BORN uses $\mathcal{BEL}$, that extends the $\mathcal{EL}$ Description Logic with Bayesian networks and is strongly related to DISPONTE. In fact, DISPONTE is a special case of $\mathcal{BEL}$ where (1) every axiom corresponds to a single Boolean random variable, while $\mathcal{BEL}$ allows a set of Boolean random variables; and (2) the Bayesian network has no edges, i.e., all the variables are independent.
This special case greatly simplifies reasoning while still achieving significant expressiveness.
Note that if we need the added expressiveness of $\mathcal{BEL}$, as shown in \cite{ZesBelRig16-AMAI-IJ}, the Bayesian network can be translated into an equivalent one where all the random variables are mutually unconditionally independent, so that the KB can be represented with DISPONTE.

Because of the above differences, the comparison with PRONTO and BORN is only meant to provide an empirical comparison of the difficulty of reasoning under the various semantics. 


\trillp is implemented  both in YAP and SWI-Prolog, while \trillpbdd only in SWI-Prolog, thus all tests were run with the SWI-Prolog version of the \trillp\footnote{The SWI-Prolog version exploits the solver contained in the  \texttt{clpb} (\url{http://www.swi-prolog.org/pldoc/man?section=clpb}) library.}. 
Pellet, BUNDLE and BORN are implemented in Java. BORN needs ProbLog to perform inference, we used version 2.1. To get the fairest  results, the measured running time does not include the start-up time of the Prolog interpreter and of the Java virtual machine, but only inference and KBs loading.

All tests were performed on the HPC System Marconi\footnote{\url{http://www.hpc.cineca.it/hardware/marconi}} equipped with Intel Xeon E5-2697 v4 (Broadwell) @ 2.30 GHz, using 8 cores for each test.

\subsection{Non-Probabilistic Inference}
We performed three different tests for the non-probabilistic case. One with KBs modeling real world domains and two with artificial KBs.

\paragraph{Test 1}
\label{test:np-1}
We used four real-world KBs as in \cite{ZesBelRig16-AMAI-IJ}:
\begin{itemize}
 \item BRCA\footnote{\url{http://www2.cs.man.ac.uk/~klinovp/pronto/brc/cancer_cc.owl}}, which models the risk factors of breast cancer depending on many factors such as age and drugs taken;
 \item an extract of  DBPedia ontology\footnote{\url{http://dbpedia.org/}}, containing structured information of Wikipedia, usually those contained in the information box on the right hand side of a page;
 \item BioPAX level 3\footnote{\url{http://www.biopax.org/}}, which models metabolic pathways;
 \item Vicodi\footnote{\url{http://www.vicodi.org/}}, which contains information on European history and models historical events and important personalities.
\end{itemize}
We used a version of the DBPedia and BioPAX KBs without the ABox and a version of  BRCA and Vicodi with an ABox containing 1 individual and 19 individuals respectively.
We randomly created 50 subclass-of queries for DBPedia and BioPAX and 50 instance-of queries for the other two, ensuring each query had at least one explanation.
We ran each query with two different settings.

In the first setting, we used the reasoners to answer Boolean queries. We compared Konclude, Pellet, HermiT, Fact++, JFact, BORN, TRILL, \trillp and \trillpbdd. In this setting, Konclude has an advantage because it is optimized to test concept satisfiability. TRILL provides a predicate for answering yes/no to queries by checking for the existence of an explanation. On the other hand, \trillp and \trillpbdd  are used by checking whether the output formula  is satisfiable and BORN by checking that the probability of the query is not $0$. 

For all the considered reasoners except Konclude, we used the queries generated as described above. For Konclude, in order to perform tests as close as possible with the other competitors, for each subclass-of test with query $C\sqsubseteq D$ we extended the KB with one test concept defined as $\neg(C\wedge\neg D)$, while for each instance-of query $a:C$, where $a$ belongs to the concepts $C_1,...,C_n$, we extended the KB with one test concept defined as $\neg(C_C\wedge\neg D)$, where $C_C$ is defined as the intersection of $C_1,...,C_n$.

Table \ref{table:res-1-np-1-1} shows the average running time and its standard deviation in seconds to answer queries on each KB.
On BRCA, \trillp performs worse than TRILL and \trillpbdd since the SAT solver is repeatedly called  with complex formulas. 
Konclude is the best on all KBs except DBPedia, where \trillpbdd performs similarly. \trillpbdd is the second faster on BioPAX and BRCA, while \trillp is the second fastest algorithm on Vicodi. TRILL, \trillp, \trillpbdd and Konclude outperform Pellet, HermiT, Fact++ and JFact. 

\begin{table}
	\caption{Average time (in seconds) for answering Boolean queries  with the reasoners Pellet, HermiT, Fact++ and JFact, Konclude, BORN, TRILL, \trillp and \trillpbdd\ in Test 1, w.r.t. 4 different KBs. Each cell contains the running time $\pm$ its standard deviation. ``n.a.'' means not applicable. Bold values highlight the fastest reasoner for each KB.}
	\label{table:res-1-np-1-1}  
	
	\begin{center}
		\begin{tabular}{l|cccc}
			\hline\hline
			& BioPAX & DBPedia & Vicodi  & BRCA\\\hline
			Pellet& 1.502 $\pm$ 0.082 & 0.965 $\pm$ 0.083 & 1.334 $\pm$ 0.072 & 2.148 $\pm$ 0.12 \\
			BORN & n.a. & 6.142 $\pm$ 0.057 & n.a. & n.a. \\
			Konclude & \textbf{0.025 $\pm$ 0.004} & 0.013 $\pm$ 0.002 & \textbf{0.012 $\pm$ 0.001 } & \textbf{0.018 $\pm$ 0.001}\\
			Fact++ & 1.405 $\pm$ 0.126 & 1.230 $\pm$ 0.085 &  1.276 $\pm$ 0.131 & 1.465 $\pm$ 0.094 \\
			HermiT& 6.572 $\pm$ 0.0367 & 3.917 $\pm$ 0.279 &  6.313 $\pm$ 0.557 & 8.622 $\pm$ 0.603\\
			JFact& 1.895 $\pm$ 0.058 & 1.625 $\pm$ 0.9 &  1.772 $\pm$ 0.087 & 2.832 $\pm$ 0.1\\
			TRILL & 0.108 $\pm$ 0.047 & 0.106 $\pm$ 0.012 &  0.044 $\pm$ 0.018 & 0.800 $\pm$ 0.021\\
			\trillp & 0.109 $\pm$ 0.03 & 0.139 $\pm$ 0.007 &  0.038 $\pm$ 0.020 & 1.486 $\pm$ 0.039\\
			\trillpbdd & 0.105 $\pm$ 0.055 & \textbf{0.012 $\pm$ 0.005} &  0.041 $\pm$ 0.021 & 0.082 $\pm$ 0.018\\
			\hline\hline
		\end{tabular}  
	\end{center}
	
\end{table}

%
%

In the second setting, we collected all the explanations, that is the fairest comparison since both \trillp and \trillpbdd explore all the search space during inference, and so does BORN. 
We ran Pellet, HermiT, Fact++, JFact, BORN, TRILL, \trillp and \trillpbdd, while Konclude was not considered because we are interested here in finding all the explanations.
Table \ref{table:res-1-np-2-1} shows, for each ontology, the average number of explanations, 
and the average time in seconds to answer the queries for all the considered reasoners, together with the standard deviation. The values for BORN are taken from Table~\ref{table:res-1-np-1-1} because the check on the final probability for BORN can be neglected.

BRCA and DBPedia get the highest average number of explanations as they contain mainly subclass axioms between complex concepts. 

In general TRILL, \trillp and \trillpbdd perform similarly to the first setting, while Pellet, HermiT, Fact++ and JFact are slower than in the first setting. BORN could be applied only to DBPedia given that it can only handle  $\mathcal{EL}$ DLs.
On BRCA, \trillp performs worse than TRILL and \trillpbdd since the SAT solver is repeatedly called  with complex formulas. 
\trillpbdd is the best on all KBs except Vicodi, thanks to the compact encoding of explanations via BDDs and the non-use of a SAT solver, while \trillp is the fastest algorithm on Vicodi and the second fastest algorithm on BioPAX. In all the other cases, TRILL achieves the second best results.

While TRILL, \trillp, \trillpbdd terminate within one second (except for \trillp on BRCA), the remaining reasoners
are slower. This is probably due to the approach used to find explanations (OWL Explanation library for HermiT, Fact++ and JFact, and a built-in approach for Pellet): the use of satisfiability reasoner in the HST may be less efficient than a reasoner specifically designed to return explanations.
\begin{table}[htb]
	\caption{Average number of  explanations and average time (in seconds) for computing all the explanations of queries with the reasoners Pellet, HermiT, Fact++ and JFact, BORN, TRILL, \trillp and \trillpbdd\ in \nameref{test:np-1}, w.r.t. 4 different KBs. Each cell contains the running time $\pm$ its standard deviation. ``n.a.'' means not applicable. Bold values highlight the fastest reasoner for each KB.}
	\label{table:res-1-np-2-1}  
	\begin{center}
		\begin{tabular}{l|cccc}
			\hline\hline
			& BioPAX & DBPedia & Vicodi  & BRCA\\
			Avg. N. Expl.& 3.92 & 16.32 & 1.02 & 6.49 \\\hline
			Pellet & 1.954 $\pm$ 0.363 & 1.624 $\pm$ 0.637 & 1.734 $\pm$ 0.831 & 7.038 $\pm$ 2.952\\
			BORN & n.a. & 6.142 $\pm$ 0.238 & n.a. & n.a.\\
			Fact++ & 3.837 $\pm$ 1.97 & 5.000 $\pm$ 1.266 &  2.803 $\pm$ 1.13 & 8.218 $\pm$ 3.754\\
			HermiT & 11.798 $\pm$ 4.069 & 18.879 $\pm$ 16.754 & 9.331 $\pm$ 9.509 & 25.034 $\pm$ 10.855\\
			JFact & 5.395 $\pm$ 3.913 & 12.274 $\pm$ 4.99 & 4.771 $\pm$ 4.03 & 18.068 $\pm$ 27.280\\
			TRILL & 0.137 $\pm$ 0.042 & 0.108 $\pm$ 0.01 &  0.049 $\pm$ 0.026 & 0.805 $\pm$ 0.024\\
			\trillp & 0.110 $\pm$ 0.043 & 0.139 $\pm$ 0.006 &  \textbf{0.039 $\pm$ 0.018} & 1.507 $\pm$ 0.045\\
			\trillpbdd & \textbf{0.106 $\pm$ 0.039} & \textbf{0.012 $\pm$ 0.008} & 0.041 $\pm$ 0.021 & \textbf{0.083 $\pm$ 0.031}\\
			\hline\hline
		\end{tabular}
	\end{center}
\end{table}


\paragraph{Test 2}
\label{test:np-2}
Here we followed the idea presented in Section 3.6 of \cite{ZesBelRig16-AMAI-IJ}, for investigating the effect of the non-determinism in the choice of rules. In particular, we artificially created a set of KBs of increasing size of the following form:\\
\begin{align}
&C_{1,1} \sqsubseteq C_{1,2} \sqsubseteq ... \sqsubseteq C_{1,n}\sqsubseteq C_{n+1}\notag\\
&C_{1,1} \sqsubseteq C_{2,2} \sqsubseteq ... \sqsubseteq C_{2,n}\sqsubseteq C_{n+1}\notag\\
&C_{1,1} \sqsubseteq C_{3,2} \sqsubseteq ... \sqsubseteq C_{3,n}\sqsubseteq C_{n+1}\notag\\
&...\notag\\
&C_{1,1} \sqsubseteq C_{m,2} \sqsubseteq ... \sqsubseteq C_{m,n}\sqsubseteq C_{n+1}\notag
\end{align}
with $m$ and $n$ varying in 1 to 7. The assertion $a:C_{1,1}$ is then added and the queries $Q=a:C_{n+1}$ are asked. For each KB, $m$ explanations can be found and every explanation contains $n+1$ axioms, $n$ subclass-of axioms and 1 assertion axiom. The idea is to create an increasing number of backtracking points in order to test how Prolog can improve the performance when collecting all explanations with respect to procedural languages.
For this reason, Konclude  was not considered in this test.

Table \ref{table:res-2-np} reports the average running time on 100 query executions for each system and KB when computing all the explanations for the query $Q$. Columns correspond to $n$ while rows correspond to $m$. As in \cite{ZesBelRig16-AMAI-IJ}, we set a time limit of 10 minutes for query execution.  In these cases, the corresponding cells are filled in with ``--''.

Results show that even small KBs may cause large running times for Pellet, HermiT, Fact++, and JFact , while BORN, TRILL, \trillp and \trillpbdd\  scale much better.

For $m=1, 2$, TRILL, \trillp and \trillpbdd take about the same time; for $m>2$,  \trillp's becomes slower due to the use of the SAT solver. 

BORN takes about 3.5 seconds in all cases, which is  probably due to ProbLog exploiting Prolog backtracking as well.

\begin{table}
  \caption{Average time (in seconds) for computing  all the explanations with the reasoners Pellet, BORN, Fact++, JFact, HermiT, TRILL, \trillp and \trillpbdd\ in \nameref{test:np-2}.  ``--'' means that the execution timed out (10 minutes). Columns correspond to $n$ while  rows correspond to $m$. In bold the best time for each size.}
    \label{table:res-2-np}  
{

\begin{footnotesize}
\begin{center}
\begin{tabular}{llccccccc}
\hline\hline
& \textbf{Reasoner} & \textbf{1} & \textbf{2} & \textbf{3} & \textbf{4} & \textbf{5} & \textbf{6} & \textbf{7}\\
\noalign{\vspace {.15cm}}
\multirow{8}{*}{\textbf{1}} & \textbf{Pellet} & 0.277 & 0.291 & 0.289 & 0.284 & 0.288 & 0.291 & 0.295\\
 & \textbf{BORN} & 3.622 & 3.547 & 3.566 & 3.658 & 3.581 & 3.585 & 3.586\\
 & \textbf{Fact++} & 0.289 & 0.314 & 0.338 & 0.366 & 0.39 & 0.412 & 0.432\\
 & \textbf{HermiT} & 0.675 & 0.787 & 0.875 & 0.961 & 1.057 & 1.157 & 1.222\\
 & \textbf{JFact} & 0.406 & 0.438 & 0.458 & 0.486 & 0.513 & 0.538 & 0.561\\
 & \textbf{TRILL} & 0.0004 & \textbf{0.0004} & 0.0005 & 0.0005 & 0.0006 & 0.0006 & 0.0007\\
 & \textbf{\trillp} & \textbf{0.0003} & \textbf{0.0004} & \textbf{0.0004} & \textbf{0.0004} & \textbf{0.0005} & \textbf{0.0005} & \textbf{0.0005}\\
 & \textbf{\trillpbdd} & 0.0006 & 0.0006 & 0.0007 & 0.0008 & 0.0008 & 0.0008 & 0.0009\\
\noalign{\vspace {.15cm}}
\multirow{8}{*}{\textbf{2}} & \textbf{Pellet} & 0.285 & 0.295 & 0.307 & 0.377 & 0.417 & 0.351 & 0.363\\
 & \textbf{BORN} & 3.565 & 3.611 & 3.59 & 3.605 & 3.602 & 3.582 & 3.618\\
 & \textbf{Fact++} & 0.381 & 0.504 & 0.621 & 0.768 & 0.938 & 1.12 & 1.328\\
 & \textbf{HermiT} & 1.005 & 1.326 & 1.573 & 1.892 & 2.257 & 2.776 & 3.232\\
 & \textbf{JFact} & 0.502 & 0.637 & 0.761 & 0.915 & 1.082 & 1.276 & 1.469\\
 & \textbf{TRILL} & 0.0005 & 0.0006 & 0.0007 & 0.0008 & 0.001 & 0.0011 & 0.0013\\
 & \textbf{\trillp} & \textbf{0.0003} & \textbf{0.0004} & \textbf{0.0004} & \textbf{0.0004} & \textbf{0.0005} & \textbf{0.0005} & \textbf{0.0005}\\
 & \textbf{\trillpbdd} & 0.0007 & 0.0008 & 0.0009 & 0.001 & 0.0011 & 0.0012 & 0.0013\\
\noalign{\vspace {.15cm}}
\multirow{8}{*}{\textbf{3}} & \textbf{Pellet} & 0.298 & 0.325 & 0.361 & 0.409 & 0.465 & 0.526 & 0.595\\
 & \textbf{BORN} & 3.551 & 3.693 & 3.626 & 3.61 & 3.601 & 3.623 & 3.647\\
 & \textbf{Fact++} & 0.532 & 0.85 & 1.336 & 2.072 & 3.15 & 4.565 & 6.38\\
 & \textbf{HermiT} & 1.384 & 2.119 & 3.295 & 4.715 & 7.095 & 10.062 & 14.111\\
 & \textbf{JFact} & 0.658 & 0.984 & 1.48 & 2.216 & 3.318 & 4.735 & 6.554\\
 & \textbf{TRILL} & \textbf{0.0006} & \textbf{0.0008} & \textbf{0.001} & \textbf{0.0012} & 0.0015 & 0.0018 & 0.0022\\
 & \textbf{\trillp} & 0.0015 & 0.0019 & 0.0023 & 0.0026 & 0.0031 & 0.0037 & 0.0043\\
 & \textbf{\trillpbdd} & 0.0007 & 0.0009 & 0.0011 & \textbf{0.0012} & \textbf{0.0014} & \textbf{0.0016} & \textbf{0.0019}\\
\noalign{\vspace {.15cm}}
\multirow{8}{*}{\textbf{4}} & \textbf{Pellet} & 0.314 & 0.381 & 0.487 & 0.647 & 0.914 & 1.476 & 2.423\\
 & \textbf{BORN} & 3.525 & 3.599 & 3.616 & 3.621 & 3.629 & 3.612 & 3.641\\
 & \textbf{Fact++} & 0.707 & 1.582 & 3.62 & 7.523 & 14.539 & 26.024 & 43.627\\
 & \textbf{HermiT} & 1.799 & 3.807 & 7.968 & 16.544 & 32.073 & 56.955 & 95.111\\
 & \textbf{JFact} & 0.832 & 1.707 & 3.77 & 7.686 & 14.526 & 25.939 & 43.168\\
 & \textbf{TRILL} & \textbf{0.0008} & \textbf{0.001} & \textbf{0.0013} & 0.0017 & 0.0022 & 0.0027 & 0.0032\\
 & \textbf{\trillp} & 0.0048 & 0.0067 & 0.0088 & 0.0116 & 0.015 & 0.0187 & 0.0235\\
 & \textbf{\trillpbdd} & 0.0009 & 0.0011 & \textbf{0.0013} & \textbf{0.0016} & \textbf{0.0019} & \textbf{0.0022} & \textbf{0.0026}\\
\noalign{\vspace {.15cm}}
\multirow{8}{*}{\textbf{5}} & \textbf{Pellet} & 0.34 & 0.488 & 0.824 & 2.054 & 5.287 & 16.238 & 45.527\\
 & \textbf{BORN} & 3.348 & 3.376 & 3.369 & 3.39 & 3.404 & 3.414 & 3.438\\
 & \textbf{Fact++} & 0.987 & 3.482 & 11.691 & 33.454 & 82.118 & 181.965 & 378.121\\
 & \textbf{HermiT} & 2.548 & 7.741 & 25.869 & 73.409 & 178.472 & 384.51 & --\\
 & \textbf{JFact} & 1.112 & 3.649 & 11.782 & 33.43 & 81.333 & 178.707 & 367.852\\
 & \textbf{TRILL} & \textbf{0.0009} & \textbf{0.0013} & 0.0018 & 0.0023 & 0.003 & 0.0037 & 0.0046\\
 & \textbf{\trillp} & 0.0077 & 0.011 & 0.0149 & 0.0202 & 0.0268 & 0.0344 & 0.0412\\
 & \textbf{\trillpbdd} & 0.001 & \textbf{0.0013} & \textbf{0.0016} & \textbf{0.002} & \textbf{0.0025} & \textbf{0.003} & \textbf{0.0035}\\
\noalign{\vspace {.15cm}}
\multirow{8}{*}{\textbf{6}} & \textbf{Pellet} & 0.379 & 0.722 & 2.876 & 17.113 & 113.869 & -- & --\\
 & \textbf{BORN} & 3.34 & 3.36 & 3.35 & 3.397 & 3.386 & 3.398 & 3.41\\
 & \textbf{Fact++} & 1.481 & 8.898 & 48.352 & 192.633 & -- & -- & --\\
 & \textbf{HermiT} & 3.679 & 19.196 & 96.009 & 365.037 & -- & -- & --\\
 & \textbf{JFact} & 1.581 & 8.779 & 43.811 & 168.683 & 591.641 & -- & --\\
 & \textbf{TRILL} & \textbf{0.0011} & 0.0016 & 0.0023 & 0.0031 & 0.004 & 0.005 & 0.0062\\
 & \textbf{\trillp} & 0.0114 & 0.0171 & 0.0241 & 0.033 & 0.0447 & 0.0553 & 0.0669\\
 & \textbf{\trillpbdd} & \textbf{0.0011} & \textbf{0.0015} & \textbf{0.002} & \textbf{0.0025} & \textbf{0.0031} & \textbf{0.0038} & \textbf{0.0046}\\
\noalign{\vspace {.15cm}}
\multirow{8}{*}{\textbf{7}} & \textbf{Pellet} & 0.454 & 1.75 & 22.053 & 582.755 & -- & -- & --\\
 & \textbf{BORN} & 3.317 & 3.355 & 3.376 & 3.366 & 3.39 & 3.412 & 3.408\\
 & \textbf{Fact++} & 2.231 & 24.131 & 183.652 & -- & -- & -- & --\\
 & \textbf{HermiT} & 5.902 & 56.758 & 406.908 & -- & -- & -- & --\\
 & \textbf{JFact} & 2.393 & 24.15 & 180.689 & -- & -- & -- & --\\
 & \textbf{TRILL} & 0.0013 & 0.002 & 0.0029 & 0.0039 & 0.0051 & 0.0065 & 0.0081\\
 & \textbf{\trillp} & 0.0164 & 0.0254 & 0.037 & 0.0518 & 0.0672 & 0.0838 & 0.1038\\
 & \textbf{\trillpbdd} & \textbf{0.0012} & \textbf{0.0018} & \textbf{0.0024} & \textbf{0.0031} & \textbf{0.0039} & \textbf{0.0049} & \textbf{0.0059}\\
 \hline\hline
\end{tabular}
\end{center}
\end{footnotesize}

}
\end{table}

\paragraph{Test 3}
\label{test:np-3}
In the third experiment we used the KB of Example~\ref{exp-expl}.
We increased $n$ from 2 to 10 in steps of 2 and we collected the running time, averaged over 50 executions.
Table \ref{table:res-3-np} shows, for each $n$, the average time in seconds taken by the systems  for computing the set of all the explanations for  query $Q$. As for \nameref{test:np-2}, we did not consider Konclude. We set a timeout of 10 minutes for each query execution, so the cells with ``--'' indicate that the timeout occurred.

Results show that \trillpbdd and BORN avoid the exponential blow-up of the other systems, and that the former achieves the best performance.
\begin{table*}[htb]
  \caption{Average time (in seconds) for answering queries with the reasoners Pellet, BORN, Fact++, HermiT, JFact, TRILL, \trillp and \trillpbdd for the KB of Example~\ref{exp-expl} (\nameref{test:np-3}) with increasing $n$. The cells containing ``--'' mean that the execution  timed out (10 minutes). Bold values indicate the best reasoners for each size.}
    \label{table:res-3-np}  
{
\begin{center}
\begin{tabular}{l|ccccc}
\hline\hline
 & 2 & 4 & 6 & 8 & 10\\\hline
Pellet& 0.348 & 0.579 & 3.069 & -- & --\\
BORN & 3.387 & 3.339 & 3.376 & 3.503 & 4.710\\
Fact++& 0.506 & 1.194 & 3.538 & 13.839 & --\\
HermiT& 1.601 & 7.091 & 34.58 & 262.809 & --\\
JFact& 0.625 & 1.313 & 3.29 & 10.846 & --\\
TRILL & \textbf{0.003} & 0.009 & 0.101 & 4.737 & --\\
\trillp & 0.005 & 0.046 & 6.055 & -- & --\\
\trillpbdd & \textbf{0.003} & \textbf{0.006} & \textbf{0.011} & \textbf{0.019} & \textbf{0.028}\\
\hline\hline
\end{tabular}
\end{center}

%
}
\end{table*}

\subsection{Probabilistic Inference}
Similarly to the previous section, we performed three different tests, two of which extend the first and the third non-probabilistic tests.

\paragraph{Test 4}
\label{test:p-1}
We used the same KBs of the non-probabilistic \nameref{test:np-1} and the systems TRILL, \trillp, \trillpbdd, BUNDLE and BORN. For each KB we added probabilities to 50 of its axioms  randomly chosen. The probability values were learned using EDGE~\cite{RigBelLamZese13-RR13a-IC}, an algorithm for parameter learning from a set of positive and negative examples.
We considered the same queries of \nameref{test:np-1}, but in this case the reasoners are used to compute the probability of the  queries.

Table \ref{table:res-1-p-1} shows the average time in seconds taken by the systems for performing probabilistic inference over different KBs, together with standard deviation.

Comparing the results with  Table \ref{table:res-1-np-2-1}, we see the extra time for the probability computation is negligible, and in some cases these results  are even better: this is due to a time measurement error. It is also worth noting the improvement in terms of performance achieved by BUNDLE with respect to Pellet, on which it is based, obtained by the optimization implemented in BUNDLE's code.
 \trillpbdd proves to be the fastest reasoner.

\begin{table}[htb]
  \caption{Average number of  explanations and average time (in seconds) for computing the probability of queries w.r.t. BioPAX and DBPedia with the reasoners BUNDLE, BORN, TRILL, \trillp and \trillpbdd\ in \nameref{test:p-1}. Each cell contains the running time $\pm$ its standard deviation. ``n.a.'' means not applicable. Bold values highlight the fastest reasoner for each KB.}
    \label{table:res-1-p-1}  

\begin{center}
\begin{tabular}{l|cccc}
\hline\hline
 & BioPAX & DBPedia & Vicodi & BRCA \\
 Avg. N. Expl. & 3.92 & 16.32 & 1.02 & 6.49 \\\hline
BUNDLE & 1.776 $\pm$ 0.078 &  1.374 $\pm$ 0.047 & 1.355 $\pm$ 0.077 & 6.530 $\pm$ 2.863\\
BORN & n.a.  & 3.797 $\pm$ 0.319 & n.a. & n.a.\\
TRILL & 0.139 $\pm$ 0.055 & 0.110 $\pm$ 0.008 & 0.050 $\pm$ 0.023 & 0.794 $\pm$ 0.022 \\
\trillp & 0.114 $\pm$ 0.037 & 0.147 $\pm$ 0.016 & \textbf{0.040 $\pm$ 0.020} & 1.367 $\pm$ 0.038 \\
\trillpbdd & \textbf{0.110 $\pm$ 0.044} & \textbf{0.011 $\pm$ 0.003 } & 0.042 $\pm$ 0.015 & \textbf{0.083 $\pm$ 0.025} \\
\hline\hline
\end{tabular}
\end{center}

\end{table}

%
%

\paragraph{Test 5}
\label{test:p-2}
We used the KB of the non-probabilistic \nameref{test:np-3} where all the axioms
were assigned a random value of probability. As before, 
we increased $n$ from 2 to 10 in steps of 2 and we collected the running time, averaged over 50 executions with timeout set to 10 minutes.
Table \ref{table:res-2-p} shows, for each $n$, the average time in seconds taken by the systems  for computing the probability of the query $Q$. Cells with ``--'' indicate that the timeout occurred.
This test confirms the results of the non-probabilistic $Test\ 3$: \trillpbdd can avoid exponential blow-up. For instance, if we disable the time out, with $n=10$ TRILL took about 17500 seconds and \trillp took more than 24 hours whereas \trillpbdd terminated in less than one second. For $n=200$ TORNADO's running time was about 49 seconds, while for $n=300$ it was about 160 seconds.
Comparing these results with Table~\ref{table:res-3-np}, one can see that most time is spent in finding explanations.  BUNDLE scales better than Pellet since can solve  queries w.r.t. the KB with $n=6$ in less than 10 minutes.
 \trillpbdd again achieves the best performance.
\begin{table*}[htb]
  \caption{Average time (in seconds) for computing the probability of queries with the reasoners BUNDLE, BORN, TRILL, \trillp and \trillpbdd\ in \nameref{test:p-2}.  ``--'' means that the execution  timed out (600 s). Bold values indicate the best reasoners for each size.}
    \label{table:res-2-p}  
{
\begin{center}
\begin{tabular}{l|ccccc}
\hline\hline
 & 2 & 4 & 6 & 8 & 10\\\hline
BUNDLE & 1.307 & 3.116 & 19.860 & 437.118 & --\\
BORN & 4.412 & 4.47 & 4.589 & 4.495 & 4.503\\
TRILL & 0.003 & 0.010 & 0.105 & 4.732 & --\\ 
\trillp & 0.006 & 0.046 & 6.002 & -- & -- \\
\trillpbdd & \textbf{0.002} & \textbf{0.006} & \textbf{0.011} & \textbf{0.018} & \textbf{0.027}\\
\hline\hline
\end{tabular}
\end{center}
}
\end{table*}

\paragraph{Test 6}
\label{test:p-3}
The last test was performed following the approach presented in \cite{DBLP:conf/semweb/KlinovP08} where they investigated the scalability of PRONTO
on versions of BRCA of increasing size. In this test BORN couldn't be used since the expressiveness of BRCA  is higher than that of $\mathcal{EL}$ DL.
We applied PRONTO in two different versions, the version of \cite{DBLP:conf/semweb/KlinovP08} and a second one using a solver for doing LP/MILP programming\footnote{The code of this version of PRONTO was get by personal communication with Pavel Klinov.} (in our tests we used GLPK\footnote{\url{https://en.wikibooks.org/wiki/GLPK}}), presented in \cite{DBLP:conf/cade/KlinovP11}.

To test PRONTO, \citeN{DBLP:conf/semweb/KlinovP08} randomly generated and added an increasing number of conditional constraints in the non-probabilistic KB,  i.e., an increasing number of subclass-of probabilistic axioms. The number of these constraints was varied from 9 to 15, and, for each number, 100 different consistent ontologies were created. 

In this test, we took these KBs and we added an individual to each of them, randomly assigned to each simple class that appears in conditional constraints with probability 0.6. Complex classes contained in the conditional constraints were split into their components, e.g., the complex class \textit{PostmenopausalWomanTakingTestosterone} was divided into \textit{PostmenopausalWoman} and \textit{WomanTakingTestosterone}.
Finally, we ran 100 probabilistic queries of the form $a : C$ where $a$ is the added individual and $C$ is a class randomly selected among those that represent women under increased and lifetime risk such as \textit{WomanUnderLifetimeBRCRisk} and \textit{WomanUnderStronglyIncreasedBRCRisk}.

Figure \ref{fig:res-p-3} shows the execution time averaged over the 100 queries as a function of the number of probabilistic axioms. TRILL, \trillp and BUNDLE behave similarly. PRONTO and PRONTO GLPK show very different behaviors: the first one has an exponential trend while the second one is constant. \trillpbdd outperforms all the  algorithms with a constant trend.

\begin{figure*}
\begin{center}
\includegraphics[scale=0.7]{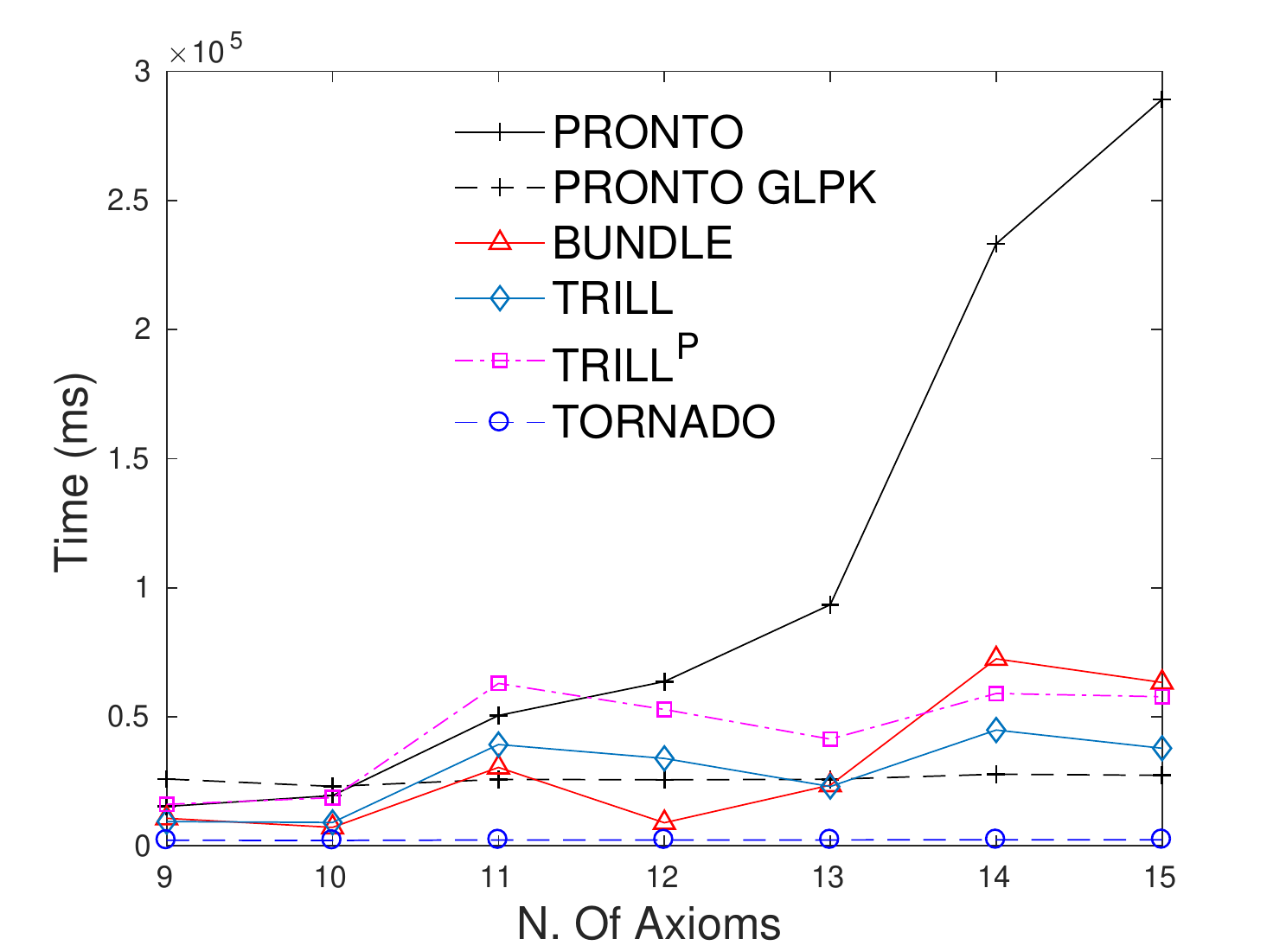}
\end{center}
\caption{Average execution time (ms) for inference with PRONTO, PRONTO GLPK (based on the GLPK LP/MILP solver), BUNDLE, TRILL, \trillp, and \trillpbdd\  on versions of the BRCA KB of increasing size in \nameref{test:p-3}.\label{fig:res-p-3}} 
\end{figure*} 

\subsection{Discussion}
Extensive experimentation shows that, in general, a full Prolog implementation of probabilistic reasoning algorithms for DL can achieve better results than other state-of-the-art probabilistic reasoners such as BORN, BUNDLE, and PRONTO, and thus a Prolog implementation of probabilistic tableau reasoners is feasible and may lead to practical systems. Confirmation of this can also be seen in the performance of BORN, exploiting Probabilistic Logic Programming techniques, which usually performs well. Moreover, the time spent in computing the probability of query is usually a small part of the total execution time, showing that probabilistic reasoners can be used also in non-probabilistic settings. In fact, as shown in non-probabilistic tests,  reasoners implemented in Prolog can achieve better results than other state-of-the-art systems, such as Pellet.
More specifically, constructing BDDs directly during the inference process improves the general performance, as shown by \trillpbdd, avoiding exponential blow-up and, in general, highly improving the scalability of the system. 
From the experimentation, \trillpbdd comes out to be the reasoner with the best performances because its running time is always comparable or better than the best results achieved by the other reasoners.
 However, there are some limitations about the supported expressiveness. In fact, \trillp and \trillpbdd support complete reasoning only for DL \shi, whereas other reasoners, with the exception of BORN, support more expressive DLs. 

%
%

\section{Conclusions}
\label{sec:concl}
In  this paper we  presented the algorithm \trillpbdd for reasoning on DISPONTE KBs that extends and improves the previous systems TRILL and \trillp. \trillpbdd, similarly to \trillp, implements in Prolog the tableau algorithm defined in \cite{DBLP:journals/jar/BaaderP10,DBLP:journals/logcom/BaaderP10}, but instead of building a pinpointing formula and translating it to a BDD in two different phases, it builds the BDD while building the tableau.
The experiments performed show that this can speed up both regular and probabilistic queries over regular or probabilistic KBs


TRILL, \trillp and \trillpbdd can be tested online at \url{http://trill.ml.unife.it/}.
\\\\\textbf{Acknowledgement}
This work was supported by the ``National Group of Computing Science (GNCS-INDAM)''.

\bibliographystyle{acmtrans}
\bibliography{bibliography/journals_short,bibliography/booktitles_long,bibliography/series_long,bibliography/series_springer,bibliography/publishers_long,bibliography/bibl}

\end{document}